\documentclass{amsart}

\usepackage{url}

\usepackage{amsmath,amsfonts,amsthm}       
\usepackage{algorithmicx}
\usepackage{algorithm}
\usepackage{algpseudocode}
\usepackage{array}
\usepackage{subcaption}

\pdfimageresolution=300
\usepackage{tikz}
\usetikzlibrary{fit}
\tikzstyle{box} = [rectangle,draw=black,
minimum width=15em,minimum height=4ex]
\tikzstyle{outerbox} = [rectangle,draw=black]
\tikzstyle{arrow} = [->]

\usepackage{multirow}

\newcommand{\bR}{\mathbb{R}}
\newcommand{\cH}{\mathcal{H}}

\newcommand{\cW}{\mathcal{W}}
\newcommand{\cC}{\mathcal{C}}
\newcommand{\cF}{\mathcal{F}}
\newcommand{\dispersion}{m.d. weighting}

\newcommand{\magn}{\mathsf{Mag}}
\newcommand{\magnp}{\mathsf{Mag}_{+}}
\newcommand{\wg}{\mathsf{wg}}
\newcommand{\wgp}{\mathsf{wg}_{+}}

\newcommand{\highlight}[1]{{#1}}

\newtheorem{defn}{Definition}
\newtheorem{lem}[defn]{Lemma}
\newtheorem{prop}[defn]{Proposition}

\newtheorem{cor}[defn]{Corollary}
\newtheorem{ex}[defn]{Example}
\newtheorem{rmk}[defn]{Remark}

\begin{document}

\title{Maximum Diversity and Weighting for invariants of periodic time series}
\author[Byungchang So]{Byungchang So*}
\thanks{*Seoul National University Department of Mathematical Sciences, Seoul, Seoul, South Korea; \url{sinwall@snu.ac.kr}; Corresponding author.\newline\indent
A preprint version of this article was published in \cite{PREPRINT}.
}

\date{}

\begin{abstract}
Magnitude, obtained as a special case of Euler characteristic of enriched category, represents a sense of the size of metric spaces and is related to classical notions such as cardinality, dimension, and volume. 
While the studies have explained the meaning of magnitude from various perspectives, continuity also gives a valuable view of magnitude. 
Based on established results about continuity of magnitude and maximum diversity, this article focuses on continuity of weighting, a distribution whose totality is magnitude, and its variation corresponding to maximum diversity. 
Meanwhile, recent studies also illuminated the connection between magnitude and data analysis by applying magnitude theory to point clouds representing the data or the set of model parameters. 
This article will also provide an application for time series analysis by introducing a new kind of invariants of periodic time series, where the invariance follows directly from the continuity results. 
As a use-case, a simple machine learning experiment is conducted with real-world data, in which the suggested invariants improved the performance.
\end{abstract}

\keywords{
magnitude; maximum diversity; time series analysis; electrocardiogram;} 


\maketitle

\section{Introduction}
\label{sec:intro}

The magnitude of a metric space \cite{Leinster2013} is a real number quantifying a certain sense of size. 
Based on the observation that every metric space can be identified as an enriched category, magnitude is obtained as a special case of Euler characteristic of enriched category, which encompasses generalization of Euler characteristic (along with M\"{o}bius inversion) of group, poset, and so forth \cite{Leinster2008}. 
Its theory is being developed through various tools such as differential calculus \cite{BarceloCarbery2018}, random matrix theory \cite{Meckes2020}, and homological algebra \cite{HepworthWillerton2017} and so forth. 
In addition, more classic quantitative notions such as cardinality, dimension \cite{Meckes2015}, and some intrinsic volumes \cite{GimperleinGoffeng2021} can be obtained as byproducts of magnitude by varying scale of a metric space. 

What does magnitude signify? 
Magnitude is considered ``the effective number of points'' \cite{Leinster2013} of a metric space in the sense that a group of points close to each other looks similar to a single point. 
Perhaps this peculiar character also motivated studies on the magnitude of various cases of metric spaces, in an attempt to understand the magnitude. 
Another possible attempt may be the investigation of continuity with respect to certain topology(e.g., Gromov-Hausdorff topology), as researchers have found both affirmative and negative cases. 
There is another concept relevant to the magnitude: weighting, a distribution that represents the contribution of each element of a metric space to the magnitude. 
Containing more information than the magnitude, the weighting must also be connected to the essence of magnitude. 

Meanwhile, Meckes \cite{Meckes2013} suggested a similar notion, maximum diversity, via modifying one of the definitions of magnitude, and showed that it not only shares similar properties with but also is closely related to magnitude. 
Unlike magnitude, maximum diversity lacks category-theoretic motivation, and not every feature of magnitude theory carries over to the maximum diversity side. 
For example, the counterpart for maximum diversity of weighting for magnitude has not been discussed. 
However, maximum diversity has advantages as well: for example, maximum diversity is continuous with respect to Gromov-Hausdorff distance, where the same property holds only partially for magnitude.

Understanding magnitude and weighting, or maximum diversity, which can be glimpsed through ecological model \cite{LeinsterCobbold2012} as well, seems significant in both theoretic and applicational domains, as its connection to machine learning and data analysis was observed \cite{Andreeva+2023}. 
Indeed, magnitude and weighting look to suit some tasks of data analysis, in that the dataset as a set of vectors can be regarded as finite metric space and the ``shape'', which magnitude or maximum diversity have information about, of such dataset may matter as TDA(topological data analysis) claims. 

Thus aiming for magnitude and relevant concepts, the current article contributes the following. 
\begin{itemize}
\item \highlight{%
This article explains how the continuity of weighting may implied from the continuity of magnitude, from an inequality that relates difference of weightings to difference of magnitudes.
}
\item \highlight{%
Also, ``maximum diversity weighting,'' which plays similar role to maximum diversity as weighting does to magnitude, is newly defined.. 
The inequality and continuity similar to those of weighting and magnitude are obtained for m.d. weighting and maximum diversity as well. 
}
\item \highlight{%
A new kind of numerical invariants of periodic time series is suggested. 
These invariants can be computed efficiently, in the sense that they do not require the input data be given with periodic segmentation. 
The nice properties originates from the continuity of magnitude and maximum diversity. 
A potential application of the invariants to machine learning and data analysis will be observed, specifically in electrocardiogram(ECG) identification.}
\end{itemize}


The remaining part of this article is organized as follows. 
In Section \ref{sec:background} we establish essential notions and review known results of magnitude theory. 
Section \ref{sec:continuity} contains the main theoretic contribution,  continuity of weighting, and its counterpart in the maximum diversity side. 
The following two sections discuss potential applications to time series analysis, Section \ref{sec:application} including propositions and simple examples, and Section \ref{sec:experiment} machine learning experiments.

Table \ref{tab:symbols} summarizes mathematical symbols those frequently used throughout this article and their names and references in this article.
\begin{table}
\centering
\begin{tabular}{ccc}
\hline 
symbol & name & reference  \\ \hline \hline
$\magn$ & magnitude & Definition \ref{def:weighting}(1) \\ \hline
$\wg$ & weighting & Definition \ref{def:weighting}(1) \\ \hline
$\magnp$ & maximum diversity & Definition \ref{def:weighting}(2) \\ \hline
$\wgp$ & maximum diversity weighting & Definition \ref{def:weighting}(2) \\ \hline
$Z_X$ & similarity matrix & Definition \ref{def:zeta}, equation \eqref{eq:zeta-map} \\ \hline
$\cW_X$ & weighting space & equation \eqref{eq:w-space} (see also Section \ref{subsec:w-general}) \\ \hline
$d_H$ & Hausdorff distance & equation \eqref{eq:d_H} \\ \hline
\end{tabular}
\caption{\highlight{Frequently used symbols.}}
\label{tab:symbols}
\end{table}

\section{Background on magnitude theory}
\label{sec:background}

In this section, we review definitions and properties of magnitude, weighting, and maximum diversity of metric spaces. 

A metric space $X=(X, d)$ is a set $X$ equipped with a real-valued function $d : X \times X \rightarrow \bR$ which satisfies the following three axioms: for each $x,y,z \in X$,
\begin{itemize}
    \item (nondegeneracy) $d(x,y) \geq 0$ where $d(x,y)=0 \,\Leftrightarrow \, x = y$
    \item (symmetricity) $d(x,y)=d(y,x)$
    \item (triangle inequality) $d(x,y)+d(y,z) \geq d(x,z)$
\end{itemize}
A finite metric space is a metric space whose underlying set is finite.

\highlight{%
We need the following concepts for the definitions magnitude and maximum diversity.
}

\begin{defn} \label{def:zeta}
\begin{enumerate}
    \item The \textbf{similarity matrix} $Z_X$ of a finite metric space $(X,d)$ is the square matrix
\begin{equation*}
    Z_X = (e^{-d(x,y)})_{x,y \in X}
\end{equation*}
with entries indexed by pairs of elements of $X$. 
\item A finite metric space $(X,d)$ is called \textbf{positive definite} if $Z_X$ is a (strictly) positive definite matrix. 
An arbitrary metric space $(X, d)$ is called positive definite if every finite subset of $X$ is positive definite.
\end{enumerate}
\end{defn}

Examples of positive definite metric space include $\bR^d$ for any natural number $d$ \cite{Leinster2013}, $L^p$ spaces ($0<p\leq 2$) \cite{Meckes2013}, weighted trees \cite{Hjorth+1998}, etc.

\highlight{%
Given a positive definite metric space $X$, its magnitude $\magn X$ and maximum diversity $\magnp X$ are both positive real numbers representing size of $X$ in certain senses. 
Both $\magn X$ and $\magnp X$ will be defined below for finite $X$ first and then extended to general case. 
The definitions of magnitude, weighting, and maximum diversity in this section are from works such as \mbox{\cite{Leinster2013,LeinsterMeckes2017}}
with slight modification.
}

\begin{rmk}
\highlight{%
The magnitude and weighting were originally defined more generally for enriched categories \mbox{\cite{Leinster2013}}. 
The exponential function in the definition of $Z_X$ is chosen due to its multiplicativity $e^{a+b} = e^a e^b$, which corresponds to a condition in the original definition.
In this paper, we will not cover the theory beyond metric spaces. 
}
\end{rmk}

\subsection{Finite subspace case}
Given a finite metric space $X$, let \textbf{weighting space} $\cW_X$ of $X$ be the set of signed measures on $X$, i.e.,
\begin{equation} \label{eq:w-space}
    \cW_X := \left\{ \sum_{x\in X} w_x \delta_x : w_x \in \bR \right\},
\end{equation}
where $\delta_x$ is the Dirac measure at $x$. 
We equip $\cW_X$ with the inner product 
\begin{align*}
    \langle \cdot,\cdot \rangle _{\cW_X} : \cW_X \times \cW_X &\rightarrow \bR \\
    \left\langle \sum_{x\in X} w_x \delta_x , \sum_{y\in Y} v_y \delta_y \right\rangle _{\cW_X} &= \sum_{x,y \in X} w_x v_y e^{-d(x,y)}.
\end{align*}
In addition, we define canonical pairing $\langle \cdot,\cdot \rangle$ of $\cW_X$ with the set $\bR^X$ of vectors indexed by elements of $X$ as
\begin{align*}
    \langle \cdot, \cdot \rangle : \cW_X \times \bR^X &\rightarrow \bR\\
    \left\langle \sum_{x \in X} w_x \delta _x , (v_x)_{x\in X}\right\rangle &= \sum _{x \in X} w_x v_x
\end{align*}
For notational convenience, we will canonically identify $\cW_X$ with the set $\bR^X$ via $\sum_{x \in X} w_x \delta_x = (w_x)_{x\in X}$. 
In particular, we have
\begin{equation*}
w,v \in \cW_X \quad \Rightarrow \quad \langle w, v \rangle_{\cW_X} = \langle w, Z_X v \rangle.
\end{equation*}

\begin{defn}\label{def:weighting}
Let $(X,d)$ be a finite, positive definite metric space.
\begin{enumerate}
\item 
\highlight{The \textbf{magnitude} $\magn(X)$ of $X$ is real number}
\begin{equation*}
\magn(X) = \langle {Z_X}^{-1} \mathbf{1}, \mathbf{1} \rangle,
\end{equation*}
\highlight{and the \textbf{weighting} $\wg(X)$ of $X$ is an element of $\cW_X$,}
\begin{equation*}
\wg(X) = {Z_X}^{-1} \mathbf{1}
\end{equation*}

\item Consider the same optimization problem with an additional constraint:
    \begin{equation}  \label{eq:opt-prob2}
        \begin{aligned}
        \underset{0 \neq w \in {\cW_X}}{ \text{\emph{maximize}} } \quad 
        & \frac{\langle w, \mathbf{1} \rangle^2}{\langle w, w \rangle_{\cW_X}}
        = \frac{\left(\sum_{x \in X} w_x \right)^2}{\sum_{x,y\in X} e^{-d(x,y)} w_x w_y},\\
        \text{\emph{subject to}} \quad 
        & w \succeq 0,
        \end{aligned}
    \end{equation}
    where $\succeq$ refers to the comparison between signed measures in $\cW_X$, or equivalently entry-wise comparison in $\bR^X$. 
    The \textbf{maximum diversity} $\magnp(X)$ of $X$ is the attained maximum, and $\wgp(X)$ is the solution to the problem which satisfies the normalization condition $\langle \wgp(X), \mathbf{1} \rangle = \magnp(X)$.
\end{enumerate}

\end{defn}
Some details need to be explained concerning the above definition. 
\paragraph{\highlight{\textbf{Alternative description for magnitude and weighting}}}{%
\highlight{$\magn(X)$ and $\wg(X)$ admit the following equivalent definitions \mbox{\cite[Proposition 2.4.3]{Leinster2013}}. 

Consider the following optimization problem on $w \in \cW_X$:}
    \begin{equation} \label{eq:opt-prob1}
    \underset{0 \neq w \in {\cW_X}}{ \text{{maximize}} } \quad 
     \frac{\langle w, \mathbf{1}\rangle ^2}{ \langle w, w \rangle_{\cW_X} } 
    = \frac{\left(\sum_{x \in X} w_x \right)^2}{\sum_{x,y\in X} e^{-d(x,y)} w_x w_y},
    \end{equation}
\highlight{where $\mathbf{1}$ is the all-one vector in $\bR^X$. 
Then, we can proved that $\magn(X)$ is equal to the attained maximum, and $\wg(X)$ of $X$ is the solution to the problem that satisfies the normalization condition $\langle \wg(X) ,\mathbf{1} \rangle = \magn(X)$. 
Indeed, the substitution $v = \frac{1}{\langle w, \mathbf{1} \rangle} w$ in \mbox{\eqref{eq:opt-prob1}} results in an equivalent optimization problem}
\begin{equation*}
\begin{aligned}
\underset{0 \neq v \in {\cW_X}}{ \text{maximize} } \quad 
& \frac{1}{ \langle Z_X v, v \rangle },\\
\text{subject to}\quad
& \langle v,\mathbf{1} \rangle =1,
\end{aligned}
\end{equation*}
\highlight{%
and the solution can be found e.g. by Lagrange multiplier method as in Definition \mbox{\ref{def:weighting}(1)}. 
This is a definition of magnitude $\magn$ parallel to that of maximum diversity $\magnp$.
}
}

\paragraph{\highlight{\textbf{Well-definedness of $\magnp$ and $\wgp$}}}{%
Since $Z_X$ is positive definite for finite, positive definite $X$ by definition, the objective function 
\begin{equation} \label{eq:obj-fn}
    \cW_X \ni w \neq 0
    \quad \rightarrow \quad 
    \frac{\langle w, \mathbf{1} \rangle^2}{\langle w, w \rangle_{\cW_X}}
    =\frac{\langle w, \mathbf{1} \rangle^2}{\langle w, Z_X w \rangle} \in \bR
\end{equation}
of the optimization problems \eqref{eq:opt-prob2} and \eqref{eq:opt-prob1} is well-defined for $w \neq 0$ and takes nonnegative values. 
Then the values $\magn(X)$ and $\magnp(X)$ are positive because the expression \eqref{eq:obj-fn} takes nonzero value for some $w$. 

Meanwhile, the uniqueness of $w_{X,+}$ can be observed as follows. 
If we substitute by $v = \frac{1}{\langle w, \mathbf{1} \rangle} w$ for $w$ such that $\langle w , \mathbf{1} \rangle \neq 0$, then maximizing \eqref{eq:obj-fn} is equivalent to minimizing $\langle v , v \rangle _{\cW}$, which is a strictly convex function by positive definiteness of $X$. 
Therefore, maximizer of problem \eqref{eq:opt-prob2} is uniquely determined up to a constant factor, where with respective normalization $\langle \wgp(X),\mathbf{1}\rangle = \magnp(X)$ $\wgp(X)$ is uniquely determined.
}

\paragraph{\highlight{\textbf{Heuristic explanations}}}{%
\highlight{Let us briefly observe how the magnitude $\magn(X)$ and the weighting $\wg(X)$ behaves.}
Since $Z_X\, \wg(X) = \mathbf{1}$, $\wg(X)$ means the contributions of individual points, which make up uniform value $1$ on each point when ``integrated'' with coefficients $Z_X = (e^{-d(x,y)})_{x,y\in X}$, whose entries are a decreasing function on the distance between points.
Then, for example, the entry of $\wg(X)$ of an outlier point will be close to $1$, because the point will neither give nor receive much contributions from other points. 
And \textit{vice versa}, a point with enough neighbor points can both give and receive contributions from neighboring points, so the corresponding entry of $\wg(X)$, in absolute value, may be much smaller than $1$. 
\highlight{Accordingly, $\magn(X)$ is expected to take larger values for $X$ with large mutual distances.

Meanwhile, the term ``maximum diversity'' is explained by the following interpretation of the function \mbox{\eqref{eq:obj-fn}} as a measure of diversity. 
Suppose that $w \in \cW_X$ satisfies $w \succeq 0$ and $\langle w, \mathbf{1} \rangle$ is fixed. 
Then the sum $\sum_{x,y \in X} w_x w_y$ is also fixed, and so the value of $\langle w,w\rangle_{\cW_X} = \sum_{x,y\in X} e^{-d(x,y)}w_x w_y$ becomes smaller if larger values of $w_x w_y$ correspond to larger distance $d(x,y)$. 
That is, the inverted quantity $\frac{1}{\langle w,w\rangle_{\cW_X}}$ measures how much the entries of $w$ are dispersed over $X$. 
}
}
\begin{ex}[originally from {\cite[p.1]{Willerton2015}}]
Let $X = \{x,y,z\} \subset \bR^2$ with $d(x,y)=d(x,z)=t$, $d(y,z)=ct$ for a positive real number $t$ and real number $c$ such that $0< c < 2$.
If we enumerate entries indexed by $x,\,y,\,$ and $z$ in this order,  the similarity matrix becomes 
\begin{equation*}
    Z_X = \begin{pmatrix}
    1 & e^{-t} & e^{-t} \\
    e^{-t} & 1 & e^{-ct} \\
    e^{-t} & e^{-ct} & 1
\end{pmatrix},
\end{equation*}
and its inverse is 
\begin{align*}
    {Z_X}^{-1}  = & \frac{1}{(1+2e^{-(2+c)t})-(2e^{-2t}+e^{-2ct})} \times \\
& \begin{pmatrix}
    1-e^{-2ct} & {e^{-(1+c)t}-e^{-t}} & {e^{-(1+c)t}-e^{-t}} \\
    {e^{-(1+c)t}-e^{-t}}& 1-e^{-2t} & e^{-2t}-e^{-ct} \\
    {e^{-(1+c)t}-e^{-t}} & e^{-2t}-e^{-ct}& 1-e^{-2t}
\end{pmatrix}. 
\end{align*}
Thus, we have
\begin{align*}
\wg(X) & = {Z_X}^{-1} \mathbf{1} \\
& = \frac{1}{1+e^{-ct}-2e^{-2t}} 
\begin{pmatrix}
    1+e^{-ct}-2e^{-t}\\ 
    1-e^{-t}\\ 
    1 -e^{-t}
\end{pmatrix}, \\
\magn(X) & = \frac{3 +e^{-ct}-4e^{-t}}{1+e^{-ct}-2e^{-2t}}.
\end{align*}
Since the entries of $\wg(X)$ are positive, we have $\magnp(X) = \magn(X)$ and $\wgp(X)=\wg(X)$. 
Here are some immediate but noteworthy observations.
\begin{enumerate}
    \item $\lim_{t \searrow 0} \magn(X) = 1$ 
\item $\lim_{t \searrow 0} \wg(X)_x = \frac{2-c}{4-c},\quad \lim_{t\searrow 0} \wg(X)_y = \lim_{t\searrow 0} \wg(X)_z = \frac{1}{4-c}$
\item $\lim_{t \rightarrow \infty} \magn(X) = 3$
\item $\lim_{t\rightarrow \infty} \wg(X)_x = \lim_{t\rightarrow \infty} \wg(X)_y =\lim_{t\rightarrow \infty} \wg(X)_z = 1$
\item $\wg(X)_x - (\wg(X)_y + \wg(X)_z) = \frac{-(1-e^{-ct})}{1+e^{-ct}-2e^{-2t}} = -\frac{c}{2-c}t + O(t^2)$
\end{enumerate}
In the words of the heuristic explanation above: when $t$ is small, \highlight{the three points contribute to one another} significantly, so they divide up total value $1$ into parts (item (1)); when $t$ is large, the contributions are meager, so each point almost maintains its value $1$. (item (3) and (4))
Moreover, when $t$ is \emph{not} large enough so that $y$ and $z$ stay relatively close to each other, $\wg(X)_x$ is close to $\wg(X)_y+\wg(X)_z$, as the contributions of the two points $y$ and $z$ is similar to that of $x$. (item (2) and (5))

The description ``the effective number of points'' for the magnitude $\magn$ of \cite{Leinster2013} also expresses how weighting (when summed up to magnitude) moderates the role of each point of the given set. 
\end{ex}
\begin{figure}
  \centering
  \includegraphics[width=0.7\textwidth]{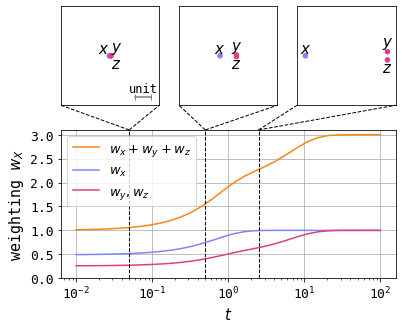}
  \caption{\highlight{Magnitude $\magn(X)$ and weighting $\wg(X)$ of three-point metric space} $X=\{x,y,z\}$ with $d(x,y)=d(x,z)=t,\ d(y,z)=0.1t$.}
  \label{fig:example-3pts}
\end{figure}

\begin{ex} \label{ex:double-s}
    For $i=1,\,2$, let $X_i$ be a finite subset of $\bR^2$ which consists of $N_i$ points randomly sampled with Gaussian noise of variance $\sigma_i$ from S-shaped parametric curve\footnote{This is implemented with Scikit-learn \cite{python.sklearn}, using \texttt{sklearn.datasets.make\_s\_curve} .}
    \begin{equation*}
        (\sin t ,\, \mathrm{sgn} (t) \cdot (\cos t - 1) ) \quad \left(  -\frac{3}{2} \pi \leq  t \leq \frac{3}{2}\pi \right) 
    \end{equation*}
in $\bR^2$. 
If $N_i$'s are large enough and $\sigma_i$'s are small enough, $X_1$ and $X_2$ will look ``effectively'' similar to each other in the sense explained above, as illustrated by the following calculation.

Suppose each of the weightings $\wg({X_1})$ and $\wg({X_2})$ as signed measures is convoluted with a kernel function. 
The result of convolution, which aggregates contributions of neighboring points, will be similar to each other. 
A specific case, including {\dispersion}s $\wgp({X_1})$ and $\wgp({X_2})$, is visualized in Figure \ref{fig:example-bna}, and the result indicates how ``effective'' count of magnitude and maximum diversity is distributed by the weighting and {\dispersion}.
\end{ex}
\begin{figure*}
  \centering
  \includegraphics[width=0.98\textwidth]{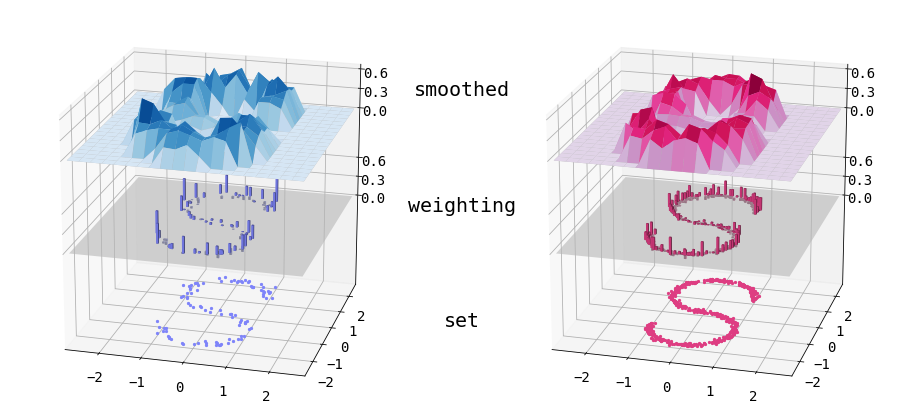}
  \includegraphics[width=0.98\textwidth]{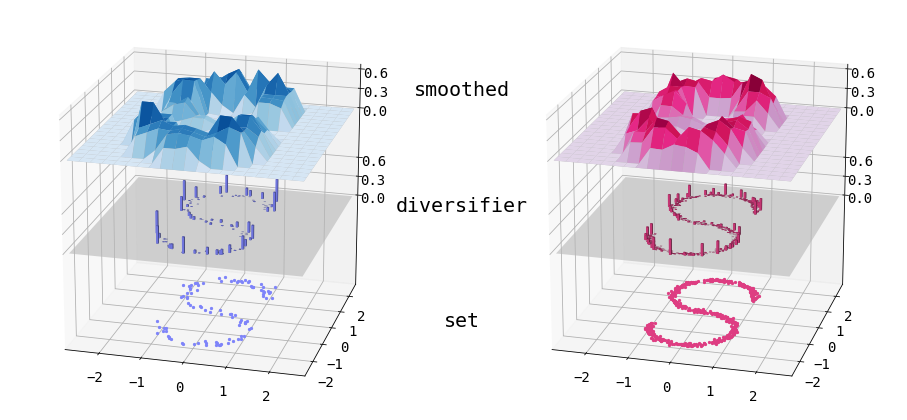}
  \caption{Weightings and {\dispersion}s of two finite subsets with similar shapes in Example \ref{ex:double-s}. 
  \highlight{The top layers of each frame shows the convolution of $\wg(X_i)$ or $\wgp(X_i)$ with a kernel function.}
  The parameters are $N_1=100,\,\sigma_1 = 0.1$ (left) and $N_2=500,\,\sigma_2=0.05$ (right), and the kernel is the indicator function of the disk of radius $0.5$. }
  \label{fig:example-bna}
\end{figure*}

\highlight{The following properties are} immediate from Definition \ref{def:weighting}. 
\begin{prop} \label{prop:basics}
\begin{enumerate}
    \item (monotonicity) For two finite positive definite metric spaces $A$ and $B$, we have
    \begin{equation} \label{eq:monotone-f}
    A \subset B \quad \Rightarrow \quad \magn(A) \leq \magn(B),\, \magnp(A) \leq \magnp(B)
    \end{equation}
    \item For any finite positive definite metric space $X$, we have
\begin{align}\label{eq:m=w1=ww-f} 
    \magn(X)_{\phantom{+}} & = \langle \wg(X) ,_{\phantom{+}}\mathbf{1} \rangle = \langle \wg(X) ,_{\phantom{+}} \wg(X)_{\phantom{+}} \rangle_{\cW_X} , \\ \label{eq:m=d1=dd-f}
    \magnp(X) & = \langle  \wgp(X), \mathbf{1} \rangle = \langle \wgp(X) , \wgp(X) \rangle_{\cW_X}.
\end{align}
\end{enumerate}
\end{prop}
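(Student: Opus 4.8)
The plan is to treat the two parts separately; both follow quickly from Definition \ref{def:weighting}, with part (2) being essentially a matter of unwinding the normalization and part (1) resting on a zero-extension (padding) argument.

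For part (2), the equality $|X| = \langle w_X, \mathbf{1}\rangle$ is exactly the normalization condition imposed on $w_X$ in Definition \ref{def:weighting}, and likewise $|X|_+ = \langle w_{X,+},\mathbf{1}\rangle$ for $w_{X,+}$. For the remaining equalities I would use that $w_X$ is a maximizer of \eqref{eq:opt-prob1} and $|X|$ the attained maximum, so evaluating the objective \eqref{eq:obj-fn} at $w_X$ gives $\langle w_X,\mathbf{1}\rangle^2 / \langle w_X, w_X\rangle_{\cW_X} = |X|$; substituting $\langle w_X,\mathbf{1}\rangle = |X|$ and solving for $\langle w_X, w_X\rangle_{\cW_X}$ yields $\langle w_X, w_X\rangle_{\cW_X} = |X|$. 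The identical computation with \eqref{eq:opt-prob2} and $w_{X,+}$ gives \eqref{eq:m=d1=dd-f}.

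For part (1), suppose $A \subset B$ with $B$ finite and positive definite; then $A$ is positive definite too, since $Z_A$ is the principal submatrix of $Z_B$ indexed by $A$ and principal submatrices of positive definite matrices are positive definite, so $|A|, |A|_+$ are defined. Consider the zero-extension $\iota \colon \cW_A \to \cW_B$ sending $\sum_{x\in A} w_x \delta_x$ to the same measure viewed in $\cW_B$, i.e. with entries $w_x$ for $x\in A$ and $0$ for $x \in B\setminus A$. Two things need checking: $\langle \iota w, \mathbf{1}_B\rangle = \langle w, \mathbf{1}_A\rangle$, which is clear because the extra entries vanish; and $\langle \iota w, \iota w\rangle_{\cW_B} = \langle w, w\rangle_{\cW_A}$, which holds because $\langle \iota w, \iota w\rangle_{\cW_B} = \sum_{x,y\in B} e^{-d(x,y)}(\iota w)_x (\iota w)_y$ collapses to the sum over $x,y\in A$ (the only indices where $\iota w$ is nonzero) and the metric on $B$ restricts to the metric on $A$. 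Hence the objective \eqref{eq:obj-fn} for $B$, evaluated at the nonzero vector $\iota w_A$, equals the objective for $A$ at $w_A$, namely $|A|$; since $|B|$ is the maximum of that objective over all nonzero elements of $\cW_B$, we conclude $|A| \leq |B|$. For maximum diversity, $w_{A,+} \succeq 0$ gives $\iota w_{A,+} \succeq 0$, so $\iota w_{A,+}$ is a nonzero feasible point of \eqref{eq:opt-prob2} on $B$ attaining value $|A|_+$, whence $|A|_+ \leq |B|_+$.

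There is no real obstacle here; the only point demanding a moment of care is verifying that $\iota$ preserves the quadratic form $\langle\cdot,\cdot\rangle_{\cW}$ — which is precisely the statement that $Z_A$ sits inside $Z_B$ as the principal submatrix indexed by $A$ — and that it preserves nonnegativity, so that the padded weighting remains an admissible (and nonzero) competitor in the respective optimization problem.
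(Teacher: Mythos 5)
Your proof is correct and takes essentially the same approach the paper intends: the paper states Proposition \ref{prop:basics} as immediate from Definition \ref{def:weighting}, and your details (the normalization condition plus evaluating the objective \eqref{eq:obj-fn} at the maximizer for part (2), and the zero-extension of $w_A$ and $w_{A,+}$ as nonzero, feasible competitors in the optimization problems for $B$ for part (1)) are exactly the routine verification left implicit.
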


\begin{rmk} \label{rmk:1}
The entity $\wgp({X})$ is neither given a name nor defined in the literature, and we will call it \textbf{maximum diversity weighting}, \highlight{or \mbox{\dispersion} in short, in this paper}.
In \cite{LeinsterMeckes2017} not $\wgp({X})$ itself but its normalization to probability distribution is discussed and referred to as ``maximizing distribution.'' 
\highlight{The ``maximum diversity weights'' in \mbox{\cite[\S 2.2]{HeFong2019}} has no direct relation to $\wgp$ of the current article, despite seemingly relevant name.} 
\end{rmk}

\subsection{General case} \label{subsec:w-general}
Magnitude and maximum diversity are also \highlight{defined for compact positive metric spaces in general}. 
From now on, we fix a (possibly noncompact) positive definite metric space $X$ and consider compact subsets of $X$. 

\begin{defn}
Let $A$ be a compact, positive definite metric space. 
Then the \textbf{magnitude} $\magn(A) \in [0, +\infty]$ is defined as
\begin{align*}
    \magn(A) 
    & := \sup \{ \magn(B) :  B \subset A ,\, \#B < \infty \} \in [0, +\infty]
\end{align*}
Likewise, the \textbf{maximum diversity} $\magnp(A)$ is defined as
\begin{equation*}
    \magnp(A) := \sup \{ \magnp(B) :  B \subset A ,\, \#B < \infty \} \in [0, +\infty].
\end{equation*}
\end{defn}
From the definition, the monotonicity in Proposition \ref{prop:basics} for both magnitude and maximum diversity follows immediately.

To extend the definition of weighting to the general case, we need to define the weighting space $\cW_X$ appropriately. 
In addition, the weighting space will also be defined for noncompact $X$ for later use. 
The following construction is introduced in \cite{Meckes2015}.

Let $\cH_X = (\cH_X , \langle \cdot ,\cdot \rangle _{\cH_X})$ be the RKHS(reproducing kernel Hilbert space) with respect to the continuous kernel function $K(x,y) := e^{-d(x,y)}$. 
This means that $\cH_X$ consists of continuous real-valued functions on $X$, which is equal to the closed span of $\{K(x,\cdot): x \in X\}$ and satisfies reproducing property $\langle h, K(x,\cdot) \rangle_{\cH_X} =h(x)$ for each $h \in \cH_X$. 
Again, positive definiteness of $X$ assures the inner product $\langle \cdot, \cdot \rangle_{\cH_X}$ to be positive definite. 

Let
    \begin{equation*}
    FM_X = \left\{ \sum_{i=1}^k m_i \delta_{x_i} :  m_i \in \bR,\, x_i \in X \right\}
    \end{equation*}
be the set of signed measures with finite support, where $\delta_x$ refers to the Dirac measure on $x$ as before. 
Since the map $Z_X : FM_X \rightarrow \cH_X$ defined by 
\begin{equation}\label{eq:zeta-map}
Z_X \left( \sum_{i=1}^k m_i \delta_{x_i} \right) (x) = \sum_{i=1}^k m_i e^{-d(x_i, x)}.
\end{equation}
is injective by positive definiteness of $X$, we can equip $FM_X$ with the inner product $\langle \cdot , \cdot \rangle _{\cW_X}$ pulled back by $Z_X$. 
Now, let the weighting space $\cW_X$ of $X$ be the completion of $FM_X$ with respect to $\langle \cdot , \cdot \rangle_{\cW_X}$. 
Then the extension $Z_X:\cW_X \rightarrow \cH_X$ is still an isometry. 

Meanwhile, since every element of $\cH_X$ is a continuous real-valued function on $X$, we have restriction $\langle \cdot, \cdot \rangle : FM \times \cH_X \rightarrow \bR$ of canonical pairing between measures and continuous functions. 
The relation $\langle w, h \rangle = \langle Z_X w, h \rangle _{\cH_X}$ extends continuously to $\langle \cdot, \cdot \rangle : \cW_X \times \cH_X \rightarrow \bR$. 
It is easy to observe that 
\begin{equation*}
u, v \in \cW_X
\quad
\Rightarrow
\quad
\langle u, v \rangle _{\cW_X} = \langle u, Z_X v\rangle = \langle Z_X u, Z_X v \rangle _{\cH_X}
\end{equation*}

\begin{defn}
Let $X$ be a compact positive definite metric space. 
Then the \textbf{weighting} $\wg(X)$ of $X$ is the element of $\cW_X$ such that $Z_X \wg(X) = 1$ on $X$. 
\end{defn}
This definition indeed generalizes Definition \ref{def:weighting}(1). 
It is shown in \cite[Theorem 3.4]{Meckes2015} that the weighting $\wg(X)$ of $X$ exists if and only if $\magn(X) < \infty$ and that $\magn(X) = \langle \wg(X), \wg(X) \rangle_{\cW_X}$ in the case. 
At this moment, there is one more equality $\magn(X) = \langle \wg(X) , \mathbf{1} \rangle$ in equation \eqref{eq:m=w1=ww-f} to establish, which will be formulated in slightly more general form for later use.

When discussing the weighting of a compact subset $A$ of positive definite metric space $X$, it is convenient to understand $w_A$ as an element of $\cW_X$. 
This is possible because the inclusion map $FM_A \rightarrow FM_X$, which is an isometry with respect to $\langle \cdot ,\cdot \rangle _{\cW_A}$ and $\langle \cdot ,\cdot \rangle _{\cW_X}$, uniquely extends to an injection $\cW_A \rightarrow \cW_X$. 
\begin{prop}[{\cite[Proposition 4.2]{Meckes2015}}] \label{prop:wh=|A|}
If $A$ is a compact subset of a positive definite metric space $X$, then $\langle \wg(A), h \rangle = \magn(A)$ for any $h \in \cH_X$ such that $h=1$ on $A$.
\end{prop}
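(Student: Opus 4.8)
The plan is to reduce the pairing $\langle w_A, h\rangle$, which is taken in $\cW_X \times \cH_X$, to the corresponding pairing in $\cW_A \times \cH_A$, where the claim becomes transparent. Restriction of functions from $X$ to $A$ carries $\cH_X$ into $\cH_A$ — this is the standard Aronszajn restriction property of reproducing kernel Hilbert spaces, valid because the kernel of $\cH_A$ is $K|_{A\times A}$ — and this restriction map is norm-nonincreasing, hence continuous. Under it, $h$ (which by hypothesis equals $1$ on $A$) is sent to the constant function $\mathbf{1}$ on $A$; note that $\mathbf{1} \in \cH_A$ precisely because $Z_A w_A = \mathbf{1}$ by the definition of the weighting $w_A$, whose existence — tacit in the statement — already forces $|A| < \infty$ by \cite[Theorem 3.4]{Meckes2015}.

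First I would establish that the two canonical pairings are compatible under restriction. For a finitely supported measure $m = \sum_i m_i \delta_{x_i}$ with every $x_i \in A$ and any $f \in \cH_X$, the definition of the canonical pairing gives
\begin{equation*}
\langle m, f\rangle = \sum_i m_i f(x_i) = \sum_i m_i (f|_A)(x_i) = \langle m, f|_A\rangle,
\end{equation*}
the right-hand pairing being that of $\cW_A \times \cH_A$. Every element of $\cW_A$ is, by construction, a $\cW_X$-limit of such measures (the inclusion $\cW_A \hookrightarrow \cW_X$ being isometric, this is also a $\cW_A$-limit), and both pairings are continuous in the measure slot with the function slot fixed, since $|\langle w, g\rangle| = |\langle Z w, g\rangle_{\cH}| \le \|w\|\,\|g\|$ in either space. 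Hence the identity $\langle w, f\rangle = \langle w, f|_A\rangle$ extends from $FM_A$ to all of $\cW_A$.

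It then remains only to chain equalities: with $w = w_A$ and $f = h$,
\begin{equation*}
\langle w_A, h\rangle = \langle w_A, h|_A\rangle = \langle w_A, \mathbf{1}\rangle = \langle w_A, Z_A w_A\rangle = \langle w_A, w_A\rangle_{\cW_A} = |A|,
\end{equation*}
where the third step uses $h|_A = \mathbf{1} = Z_A w_A$, the fourth is the defining relation $\langle u,v\rangle_{\cW_A} = \langle u, Z_A v\rangle$, and the last is \cite[Theorem 3.4]{Meckes2015}. An equivalent route stays inside $\cH_X$: $Z_X w_A$ lies in the closed span of $\{K(x,\cdot) : x \in A\}$ and restricts to $\mathbf{1}$ on $A$, so it is $\cH_X$-orthogonal to $h - Z_X w_A$, which vanishes on $A$; therefore $\langle w_A, h\rangle = \langle Z_X w_A, h\rangle_{\cH_X} = \langle Z_X w_A, Z_X w_A\rangle_{\cH_X} = \|w_A\|_{\cW_A}^2 = |A|$.

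The only genuine work is the bookkeeping around the completions: one must confirm that the isometric embedding $\cW_A \hookrightarrow \cW_X$, the isometries $Z_A$ and $Z_X$, and the restriction $\cH_X \to \cH_A$ assemble into a commuting diagram, and that the density–continuity argument of the second paragraph is legitimate — in particular that $f|_A \in \cH_A$ (Aronszajn) so that $\langle \cdot, f|_A\rangle$ is a bounded functional on $\cW_A$. None of this is deep, but it is where the care must go; everything past it is a short chain of identities resting on \cite[Theorem 3.4]{Meckes2015}.
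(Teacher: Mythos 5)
Your argument is correct and is essentially the paper's own proof: approximate $w_A$ by finitely supported measures in $FM_A$, observe that the pairing with such measures only sees $h|_A = \mathbf{1} = Z_A w_A$, and pass to the limit to get $\langle w_A, Z_A w_A\rangle_{\phantom{\cW}} = \langle w_A, w_A\rangle_{\cW_A} = |A|$ via \cite[Theorem 3.4]{Meckes2015}. Your extra bookkeeping on the restriction $\cH_X \to \cH_A$ and the alternative orthogonality argument inside $\cH_X$ are fine but add nothing beyond the same density-and-continuity mechanism the paper uses.
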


\begin{proof}
Since $\wg(A) \in \cW_A \subset \cW_X$, there exists a sequence $(w_n)$ in $FM_A$ that converges to $\wg(A)$. 
Then we have
\begin{equation*}
\begin{aligned}
\langle \wg(A) ,h \rangle 
& = \lim_{n \rightarrow \infty} {\langle w_n ,h \rangle }  \\
& = \lim_{n \rightarrow \infty} {\langle w_n, Z_A \wg(A) \rangle  } && (Z_Aw_A = h = 1 \text{ on } A)\\
& = \langle \wg(A) , Z_A \wg(A) \rangle  = \magn(A).
\end{aligned}
\end{equation*}
\end{proof}

\begin{rmk}\label{rmk:R^d}
When $X=\bR^d$ is equipped with Euclidean distance, the weighting space $\cW_X$ is (up to a constant factor) isometric to the Sobolev space $H^{-\frac{d+1}{2}}(\bR^d)$ \cite{Meckes2015}. 
In particular, each weighting $\wg(A)$ of compact subset $A$ of $\bR^d$ is in fact a distribution.
This also shows that weighting need not be a measure. 
\end{rmk}

Before closing this section, we review relevant properties of magnitude and weighting. 
Let $\mathcal{C}_X$ be the collection of compact subsets of $X$, equipped with Hausdorff distance $d_H$. 
Recall that the Hausdorff distance $d_H(A,B)$ between two subsets $A$ and $B$ of a metric space $X$ is defined as
\begin{equation} \label{eq:d_H}
d_H(A,B) = \min \{  \max\{d(a,B):a\in A \} ,\, \max\{d(b, A): b \in B \} \},
\end{equation}
where
\begin{equation}
    d(x,A) = \min \{d(x,a) : a \in A \}
\end{equation}
as usual. 
The following propositions are essentially from \cite[Theorem 2.6 \& Proposition 2.11]{Meckes2013}, which do not explicitly mention the estimate below but contain sufficient arguments for it.

\begin{prop} \label{prop:mm-conti}
Let $X$ be a positive definite metric space and $\mathcal{C}_X$ the collection of compact subsets of $X$, equipped with Hausdorff distance $d_H$.
\begin{enumerate}
\item Magnitude as a real-valued function $\magn :\mathcal{C}_X \rightarrow \bR$ is lower semicontinuous. 
\item Maximum diversity as a real-valued function $\magnp:\mathcal{C}_X \rightarrow \bR$ is continuous. 
Specifically, for any $A,B \in \mathcal{C}_X$ we have
\begin{equation*}
1 - 4\magnp(A)\, d_H(A,B) 
\leq \frac{\magnp(B)}{\magnp(A)}   
\leq  \frac{1}{1-4\magnp(A)\, d_H(A,B)}
\end{equation*}
\end{enumerate}
\end{prop}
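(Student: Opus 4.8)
The plan is to reduce both statements to finite subspaces, where the two optimization problems of Definition~\ref{def:weighting} are finite-dimensional and depend continuously on the similarity matrix, and then, for part (2), to run a short ``transport'' estimate that moves an optimal distribution on one space onto the other.

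\emph{Part (1).} Fix $A\in\cC_X$ and a sequence $A_n\to A$ in $d_H$; it suffices to show $\liminf_n|A_n|\ge|A|$. Given a finite $B=\{b_1,\dots,b_k\}\subseteq A$, Hausdorff convergence provides $b_i^{(n)}\in A_n$ with $d(b_i,b_i^{(n)})\to0$; by the triangle inequality $d(b_i^{(n)},b_j^{(n)})\to d(b_i,b_j)$, so for $n$ large the $b_i^{(n)}$ are distinct and form a set $B_n\subseteq A_n$ with $\#B_n=k$ whose similarity matrix converges entrywise to $Z_B$. Since $Z_B$ is positive definite, $Z_{B_n}$ is positive definite for large $n$, and by continuity of matrix inversion together with \eqref{eq:Zw=1} and \eqref{eq:m=w1=ww-f}, $|B_n|=\mathbf 1^\top Z_{B_n}^{-1}\mathbf 1\to\mathbf 1^\top Z_B^{-1}\mathbf 1=|B|$. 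By monotonicity $|A_n|\ge|B_n|$, hence $\liminf_n|A_n|\ge|B|$; taking the supremum over finite $B\subseteq A$ gives $\liminf_n|A_n|\ge|A|$, i.e.\ lower semicontinuity.

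\emph{Part (2).} Dividing the objective of \eqref{eq:opt-prob2} by $\langle w,\mathbf 1\rangle^2$ rewrites it, for a finite positive definite space $X$, as
\[
|X|_+=\Bigl(\min_{p}\ \sum_{x,y\in X}e^{-d(x,y)}\,p_xp_y\Bigr)^{-1},
\]
the minimum over probability vectors $p$ on $X$; hence $1/|A|_+=\inf\{\sum_{a,a'\in A'}e^{-d(a,a')}p_ap_{a'}:A'\subseteq A\text{ finite},\ p\text{ a probability vector on }A'\}$. Now let $\varepsilon=d_H(A,B)$, pick finite $A'\subseteq A$ with $|A'|_+\ge|A|_+-\eta$ and an optimal $p$ on $A'$, choose $b(a)\in B$ with $d(a,b(a))\le\varepsilon$ for each $a\in A'$, put $B'=\{b(a):a\in A'\}$ and $q_b=\sum_{a:b(a)=b}p_a$. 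For all $a,a'$ one has $|d(b(a),b(a'))-d(a,a')|\le2\varepsilon$, so, since $t\mapsto e^{-t}$ is $1$-Lipschitz on $[0,\infty)$, $|e^{-d(b(a),b(a'))}-e^{-d(a,a')}|\le2\varepsilon$, and summing against $p_ap_{a'}$ (whose total mass is $1$),
\[
\sum_{b,b'\in B'}e^{-d(b,b')}q_bq_{b'}\ \le\ \sum_{a,a'\in A'}e^{-d(a,a')}p_ap_{a'}+2\varepsilon\ =\ \frac1{|A'|_+}+2\varepsilon\ \le\ \frac1{|A|_+-\eta}+2\varepsilon .
\]
Thus $1/|B|_+\le1/|B'|_+\le1/(|A|_+-\eta)+2\varepsilon$; letting $\eta\downarrow0$ gives $1/|B|_+\le1/|A|_+ + 2d_H(A,B)$, hence $|B|_+/|A|_+\ge(1+2|A|_+d_H(A,B))^{-1}$, which in particular implies the stated lower bound. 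Swapping $A$ and $B$ gives $1/|A|_+\le1/|B|_+ + 2d_H(A,B)$, and rearranging (valid once $2|A|_+d_H(A,B)<1$, which also yields $|B|_+<\infty$) gives $|B|_+/|A|_+\le(1-2|A|_+d_H(A,B))^{-1}$, hence the stated upper bound; for $d_H(A,B)$ large the asserted bound is vacuous.

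\emph{Where the work is.} No step is deep: part (1) is just continuity of a finite-dimensional optimum under perturbation of $Z_X$, and the heart of part (2) is the elementary Lipschitz estimate of the quadratic form under the point map $a\mapsto b(a)$. The two places needing care are the passage from finite subspaces to compact spaces — handled by the definitions of $|\cdot|$ and $|\cdot|_+$ as suprema over finite subsets, and by positive-definiteness being an open condition — and the size of the constant: the argument above actually yields the sharper $1/(1+2|A|_+d_H)$, which already implies the displayed bound, whereas a less economical routing (comparing $A$ and $B$ each with $A\cup B$, or passing through Gromov--Hausdorff distance as in \cite{Meckes2013}) accounts for the looser constant $8$.
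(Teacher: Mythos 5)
Your proof is correct, but it takes a genuinely different route from the paper, which in fact gives no argument for Proposition \ref{prop:mm-conti} at all: it attributes the result to \cite{Meckes2013}, where continuity of maximum diversity is proved with respect to Gromov--Hausdorff distance via correspondences, and the Hausdorff-distance estimate with the constant $8$ is inherited from $d_{GH}\le d_H$ plus the distortion bookkeeping. Your part (1) is the standard finite-approximation argument (entrywise convergence of similarity matrices, $|B_n|=\mathbf 1^\top Z_{B_n}^{-1}\mathbf 1$, monotonicity, supremum over finite subsets), essentially what the cited sources do; note you do not even need ``positive definiteness is an open condition,'' since every finite subset of the positive definite space $X$ is automatically positive definite. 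Your part (2) replaces the GH machinery by a direct transport of the optimal probability vector through a nearest-point map $a\mapsto b(a)$ inside the common ambient space, using the dual formula $1/|X|_+=\min_p\langle p,Z_Xp\rangle$ over the simplex together with the $1$-Lipschitzness of $t\mapsto e^{-t}$; this is more elementary, stays entirely within Hausdorff distance, and yields the sharper two-sided bound with constant $2$, which indeed implies the stated bound with $8$ via $1/(1+x)\ge 1-x$ and $1/(1-2x)\le 1/(1-8x)$ for $0\le x<1/8$. Your reading that the stated upper bound is vacuous when $8|A|_+d_H(A,B)\ge 1$ is the right one. One small caveat: the existence of $b(a)\in B$ with $d(a,b(a))\le d_H(A,B)$ (and the symmetric choice in the swapped direction) uses compactness of the sets and the usual two-sided (max) convention for Hausdorff distance; the $\min$ appearing in the paper's displayed definition of $d_H$ is evidently a typo, and your argument correctly relies on the standard definition.
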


\begin{proof}[\highlight{Proof}]
Let $A, B \in \mathcal{C}_X$ be given. 
From inequalities in the proof of \cite[Theorem 2.6]{Meckes2013}, there exist positive measures $\mu,\nu$ on $X$ such that 
\begin{align*}
\magnp(B) 
& \geq \frac{\nu(B)^2}{\langle \nu, \nu \rangle _{\cW_X}} \\
& \geq \frac{\mu(A)^2}{\langle \mu, \mu \rangle _{\cW_X} + 4 \mu(A)^2\, d_H (A,B)} \\
& \geq \frac{1}{1+\varepsilon}\frac{\langle \mu, \mu \rangle _{\cW_X}\cdot\magnp(A)}{\langle \mu, \mu \rangle _{\cW_X} + 4 \mu(A)^2 \, d_H (A,B)}.
\end{align*}
Since $\langle \mu, \mu \rangle _{\cW_X} \leq \mu(A)^2$ and $\varepsilon > 0$ can be chosen arbitrarily, 
\begin{equation*}
\magnp(B) 
\geq  \frac{\magnp(A)}{1 + 4 \frac{\mu(A)^2}{\langle \mu, \mu \rangle _{\cW_X}} d_H (A,B)} 
\geq \frac{\magnp(A)}{1 + 4 \magnp(A) d_H (A,B)} 
\end{equation*}
Likewise, the inequality
\begin{equation*}
\magnp(B) \leq \frac{\magnp(A)}{1-4\magnp(A)d_H(A,B)}
\end{equation*}
is from \cite[Proposition 2.11]{Meckes2013}. 
\end{proof}

\begin{rmk}
There exists a counterexample \cite[Example 2.3]{GimperleinGoffengLouca2025} to the upper continuity of magnitude with respect to Hausdorff distance. 
On the other hand, magnitude on compact subsets of $\bR^d$ is upper continuous at each compact subset which is star-shaped about a point of its interior \cite[Theorem 5.4.15]{LeinsterMeckes2017}. 
\end{rmk}

\section{Continuity of weighting and {\dispersion}}
\label{sec:continuity}

From the equations \eqref{eq:m=w1=ww-f}-\eqref{eq:m=d1=dd-f}, $\langle \wg(X) , \wg(X) \rangle _{\cW_X} = \magn(X)$ (resp. $\langle \wgp({X}), \wgp({X}) \rangle_{\cW_X}=\magnp(X)$) for finite $X$, we immediately observe that, as a function on $\mathcal{C}_X$ the continuity of weighting $\wg(X)$ (resp. {\dispersion} $\wgp({X})$) cannot hold without that of magnitude $\magn(X)$ (resp. maximum diversity $\magnp(X)$). 
A converse will be dealt with in this section: the continuity of magnitude $\magn(X)$ (resp. maximum diversity $\magnp(X)$) implies the continuity of weighting $\wg(X)$ (resp. {\dispersion} $\wgp({X})$). 

\subsection{Weighting case}

\highlight{
Let $\| \cdot \|_{\cW_X}$ be the norm induced by inner product $\langle \cdot, \cdot \rangle _{\cW_X}$.
}
\begin{lem}\label{lem:wm-ineq}
For two compact subsets $A$ and $B$ of a positive definite metric space $X$, we have
\begin{equation*}
\| \wg(A) - \wg(B) \|  _{\cW_X}
\leq \sqrt{\magn(A\cup B) - \magn(A)} 
+ \sqrt{\magn(A\cup B) - \magn(B)} .
\end{equation*}
\end{lem}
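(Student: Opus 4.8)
The plan is to pass to the common ambient weighting space $\cW_X$ and combine the triangle inequality there with the basic identity $\langle w_A, w_A\rangle_{\cW} = |A|$. First I would dispose of a degenerate case: $A\cup B$ is compact, and if $|A\cup B| = \infty$ the right-hand side is infinite and there is nothing to prove. So assume $|A\cup B| < \infty$; by monotonicity (Proposition~\ref{prop:basics}) this forces $|A|,|B| < \infty$, hence $w_A$, $w_B$, $w_{A\cup B}$ all exist and may be viewed as elements of $\cW_X$ via the canonical isometric inclusions $\cW_A, \cW_B, \cW_{A\cup B}\hookrightarrow \cW_X$. Writing $C = A\cup B$, the triangle inequality in $\cW_X$ gives
\[
\|w_A - w_B\|_{\cW} \le \|w_A - w_C\|_{\cW} + \|w_C - w_B\|_{\cW},
\]
so it is enough to prove $\|w_A - w_C\|_{\cW}^2 \le |C| - |A|$ whenever $A\subseteq C$ are compact with $|C| < \infty$ (and the same with $B$ in place of $A$).

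For the reduced statement I would expand
\[
\|w_A - w_C\|_{\cW}^2 = \langle w_A, w_A\rangle_{\cW} - 2\langle w_A, w_C\rangle_{\cW} + \langle w_C, w_C\rangle_{\cW},
\]
where the outer two terms equal $|A|$ and $|C|$ by equation~\eqref{eq:m=w1=ww-f} in its general-case form $|A| = \langle w_A, w_A\rangle_{\cW}$. The crux is the cross term. Setting $h := Z_X w_C \in \cH_X$, the function $Z_X w_C$ restricts on $C$ to $Z_C w_C = 1$, so $h = 1$ on $C \supseteq A$; hence $\langle w_A, w_C\rangle_{\cW} = \langle w_A, Z_X w_C\rangle = \langle w_A, h\rangle$, and Proposition~\ref{prop:wh=|A|} yields $\langle w_A, h\rangle = |A|$. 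Substituting, $\|w_A - w_C\|_{\cW}^2 = |A| - 2|A| + |C| = |C| - |A|$ (in fact an equality), and plugging into the triangle inequality above gives the lemma.

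The step demanding the most care is making the cross-term computation legitimate: one has to verify that under the embeddings $\cW_A \hookrightarrow \cW_X \hookleftarrow \cW_C$ the quantity $\langle w_A, w_C\rangle_{\cW}$ is indeed computed as the canonical pairing of $w_A$ with the continuous function $Z_X w_C$, and that $Z_X w_C$ coincides with $Z_C w_C$ on $C$ (so that it is genuinely the constant $1$ there). Both points follow from the construction of $\cW_X$ as the completion of $FM_X$, the definition of $Z_X$, and the continuous pairing $\langle\cdot,\cdot\rangle : \cW_X \times \cH_X \to \bR$ recalled in Section~\ref{sec:background}: approximating $w_A$ and $w_C$ by finitely supported signed measures reduces every identity used to the elementary finite-dimensional ones, exactly in the spirit of the proof of Proposition~\ref{prop:wh=|A|}. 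Beyond this, no estimates are required.
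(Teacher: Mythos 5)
Your proof is correct and follows essentially the same route as the paper's: compare $w_A$ and $w_B$ each with $w_{A\cup B}$, compute $\|w_A - w_{A\cup B}\|_{\cW}^2 = |A\cup B| - |A|$ via Proposition~\ref{prop:wh=|A|} applied to $h = Z_X w_{A\cup B} = 1$ on $A$, and conclude by the triangle inequality in $\cW_X$. Your explicit treatment of the case $|A\cup B|=\infty$ and of the isometric inclusions is a minor tidying of details the paper leaves implicit.
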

\begin{proof}
Comparing the weightings of $A$ and $A \cup B$, we have
\begin{align*}
\| \wg(A) - \wg({A\cup B}) \|_{\cW_X}^2  
& = \langle \wg(A), \wg(A) \rangle_{\cW_X} 
+ \langle \wg({A\cup B}), \wg({A\cup B}) \rangle_{\cW_X}  \\
&\phantom{{}={}}- 2\langle \wg({A\cup B}), \wg(A)\rangle_{\cW_X} \\
& = \magn(A) + \magn(A\cup B) - 2 \langle \wg(A) , Z_X\wg({A \cup B})\rangle \\ 
& = \magn(A) + \magn({A\cup B}) - 2\magn(A) \\ 
& = \magn({A\cup B}) - \magn(A) ,
\end{align*}
by equation \eqref{eq:m=w1=ww-f} and Proposition \ref{prop:wh=|A|} because $Z_X \wg({A \cup B}) = 1$ on $A \cup B$ and thus on $A$ in particular. 
Combining with the same inequality for $Y$ and $X \cup Y$, we deduce
\begin{align*}
\| \wg(A) - \wg(B) \|_{\cW_X} 
& \leq \| \wg(A) - \wg({A\cup B}) \|_{\cW_X} + \| \wg({A\cup B}) - \wg({B}) \|_{\cW_X} \\
& = \sqrt{\magn(A\cup B) - \magn(A)} + \sqrt{\magn(A \cup B) - \magn(B)}.
\end{align*}
\end{proof}

\begin{prop}
Let $A$ be a compact subset of a locally compact positive definite metric space $X$, and magnitude $\magn$ is continuous at $A$ \highlight{with respect to Hausdorff distance}. 
Then weighting $\wg : \mathcal{C}_X \rightarrow \cW_X$ is also continuous at $A$ \highlight{with respect to Hausdorff distance}. 
\end{prop}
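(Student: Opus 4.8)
The plan is to read the result off Lemma~\ref{lem:wm-ineq} once we show that the two quantities under the square roots there tend to $0$ as $B \to A$. The one elementary geometric input needed is that passing from $B$ to $A \cup B$ does not move us away from $A$ in the Hausdorff metric: since $A \subseteq A \cup B$ and every point of $A \cup B$ lies within $d_H(A,B)$ of $A$, we have $d_H(A, A\cup B) \le d_H(A,B)$, so $A \cup B \to A$ in $\mathcal{C}_X$ whenever $B \to A$. Monotonicity (Proposition~\ref{prop:basics}) already gives $|A\cup B| - |A| \ge 0$ and $|A \cup B| - |B| \ge 0$, so it remains only to bound these two nonnegative quantities from above.

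For the execution, fix $\varepsilon > 0$. By continuity of $|\cdot|$ at $A$ there is $\delta > 0$ such that $d_H(C, A) < \delta$ forces $\bigl||C| - |A|\bigr|$ to be as small as we wish (and, in particular, $|C| < \infty$). Take any $B \in \mathcal{C}_X$ with $d_H(B, A) < \delta$; by the previous paragraph $d_H(A\cup B, A) < \delta$ as well, so $|B|$ and $|A\cup B|$ are both finite and both close to $|A|$. Hence $w_A$, $w_B$, $w_{A\cup B}$ all exist (\cite[Theorem 3.4]{Meckes2015}) and Lemma~\ref{lem:wm-ineq} is available. Now $0 \le |A\cup B| - |A|$ is small by continuity at $A$ applied to $A\cup B$, while
\begin{equation*}
0 \le |A\cup B| - |B| = \bigl(|A\cup B| - |A|\bigr) + \bigl(|A| - |B|\bigr),
\end{equation*}
where the first summand is small as above and the second is small by continuity at $A$ applied to $B$. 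Plugging these two estimates into Lemma~\ref{lem:wm-ineq} gives $\|w_A - w_B\|_{\cW} < \varepsilon$ once $\delta$ is taken small enough, which is precisely continuity of $w_{-}$ at $A$.

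There is no serious obstacle here: the substantive work is already contained in Lemma~\ref{lem:wm-ineq}. The two points that call for a little care are, first, making sure the weightings of $B$ and of $A \cup B$ exist on a Hausdorff-neighborhood of $A$ — this is where one uses that ``continuity at $A$'' forces magnitude to be finite nearby, together with the $1$-Lipschitz bound on $B \mapsto A\cup B$ — and second, that the bound on $|A\cup B| - |B|$ must be routed through $|A|$, since magnitude is assumed continuous only at the single point $A$ and not a priori at $B$ or at $A \cup B$. Local compactness of $X$ is part of the standing setup of the section; the argument itself rests only on continuity of $|\cdot|$ at $A$ and on the Hausdorff behaviour of $B \mapsto A \cup B$.
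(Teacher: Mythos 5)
Your argument is correct, and it shares the paper's skeleton (triangle-type bound of Lemma~\ref{lem:wm-ineq} plus continuity of $|\cdot|$ at $A$), but it handles the key upper bound differently. The paper controls $|A\cup B|$ by invoking local compactness: it chooses a compact neighborhood $D$ of $A$, uses upper continuity at $A$ to arrange $|D|<|A|+\epsilon^2$, picks $\delta_1$ so that the $\delta_1$-neighborhood of $A$ lies in $D$, and then bounds $|A\cup B|\leq |D|$ by monotonicity (Proposition~\ref{prop:basics}); lower continuity is used separately to bound $|A|-|B|$. You instead observe that $d_H(A,A\cup B)\leq d_H(A,B)$, so $A\cup B$ (which is itself compact) converges to $A$ whenever $B$ does, and the continuity hypothesis at $A$ can be applied \emph{directly} to $A\cup B$ as well as to $B$. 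This is a genuine simplification: it removes the need for the auxiliary set $D$ and, with it, the local compactness assumption on $X$, while also making explicit why $w_B$ and $w_{A\cup B}$ exist (their magnitudes are finite, being close to $|A|$). What the paper's route buys in exchange is a slightly weaker demand on the continuity hypothesis at $A$ in its upper half — it only ever applies upper continuity along a shrinking family of compact neighborhoods $D\supset A$, plus monotonicity — but under the stated hypothesis (continuity at $A$ with respect to $d_H$ on $\cC_X$) your direct application to $A\cup B$ is perfectly legitimate, and your decomposition $|A\cup B|-|B|=(|A\cup B|-|A|)+(|A|-|B|)$ matches the paper's final estimate.
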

\begin{proof}
Let a positive real number $\epsilon$ be given. 
By local compactness, there exists a compact neighborhood $D$ containing $A$, i.e. $D$ is compact and there exists an open subset of $X$ between $A$ and $D$. 
By upper continuity of magnitude at $A$, we may assume $\magn(D) < \magn(A) + \epsilon^2$ by shrinking $D$ if needed. 
Let $\delta_1$ be a positive real number such that $\delta_1$-neighborhood of $A$ in $X$ is contained in $D$. 
Let $\delta_2$ be a positive real number such that, by lower continuity of magnitude,
\begin{equation*}
B \in \mathcal{C}_X \text{ and } d_H(A, B) < \delta_2
\quad \Rightarrow \quad 
\magn(B) > \magn(A) - \epsilon^2.
\end{equation*}
Then for any $B \in \mathcal{C}_X$ such that $d_H(A,B) < \min \{\delta_1, \delta_2\}$, we have
\begin{align*}
\|\wg(A) &- \wg(B) \| _{\cW_X} \\
&\leq  \sqrt{\magn(A \cup B) - \magn(A)} + \sqrt{\magn(A \cup B) - \magn(B)} \\
& \leq \sqrt{ \magn(D) - \magn(A)} + \sqrt{\magn(D) - \magn(B)} \\
& = \sqrt{ \magn(D) - \magn(A) } + \sqrt{ (\magn(D) - \magn(A)) + (\magn(A) - \magn(B))} \\
& \leq \sqrt{\epsilon^2} + \sqrt{ \epsilon^2 + \epsilon^2 } < 3 \epsilon
\end{align*}
because $A \cup B \subset D$. 
The proof is complete. 
\end{proof}
By \cite[Theorem 5.4.15]{LeinsterMeckes2017}, when $X=\bR^d$ in particular, weighting is continuous at compact subsets of $\bR^d$ which is star-shaped with respect to a point of its interior. 

\subsection{Diversifier case}
Despite the stronger continuity property of maximum diversity than magnitude, we need additional preparation to adapt the proof in the previous subsection. 
First, we need to define {\dispersion} for general compact positive definite metric space. 
Second, we need to establish a property analogous to $Z_X \wg(X) = \mathbf{1}$ for maximum diversity. 
It turns out that the two goals are intertwined.

Note that unlike the counterpart $Z_X \wg(X) = \mathbf{1}$ for weighting, the following lemma is insufficient to fully characterize {\dispersion} $\wgp({X})$ of given $X$. 
\begin{lem}\label{lem:Zw+=1+}
If $X$ is a finite positive definite metric space, then we have $Z_X \, \wgp({X}) \succeq \mathbf{1}$, i.e. every entry of $Z_X \, \wgp({X})$ is greater than or equal to $1$, and at least one entry is equal to $1$.
\end{lem}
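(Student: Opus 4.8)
The plan is to read off the claim from the variational characterization of $w_{X,+}$ by perturbing in a single coordinate. Write $w := w_{X,+}$ and $m := |X|_+$; recall $m > 0$, that $w \succeq 0$ with $\langle w, \mathbf{1}\rangle = m$ and $\langle w, w\rangle_{\cW_X} = m$ by \eqref{eq:m=d1=dd-f}, and that $w$ maximizes $\Phi(u) := \langle u, \mathbf{1}\rangle^2 / \langle u, u\rangle_{\cW_X}$ over nonzero $u \in \cW_X$ with $u \succeq 0$. For a fixed $x \in X$ and a real parameter $t$, the perturbed measure $w + t\delta_x$ is nonzero for $t$ near $0$ and lies in $\cW_X$; using $\langle \delta_x, \delta_x\rangle_{\cW_X} = (Z_X)_{x,x} = 1$, $\langle \delta_x, \mathbf{1}\rangle = 1$, and $\langle w, \delta_x\rangle_{\cW_X} = \langle w, Z_X\delta_x\rangle = (Z_X w)_x$, one computes
\begin{equation*}
\Phi(w + t\delta_x) = \frac{(m+t)^2}{m + 2t\,(Z_X w)_x + t^2},
\end{equation*}
the denominator being positive by positive definiteness, and hence
\begin{equation*}
\frac{d}{dt}\Big|_{t=0}\Phi(w + t\delta_x) = 2\big(1 - (Z_X w)_x\big).
\end{equation*}

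For the inequality $Z_X w \succeq \mathbf{1}$, fix any $x \in X$. Since $w \succeq 0$, the perturbation $w + t\delta_x$ is still $\succeq 0$ (and nonzero) for every $t \geq 0$, so it is feasible for \eqref{eq:opt-prob2}; maximality of $\Phi$ at $t=0$ forces the right-hand derivative $2(1 - (Z_X w)_x)$ to be $\leq 0$, i.e. $(Z_X w)_x \geq 1$. For the sharpness claim, note that $\langle w, \mathbf{1}\rangle = m > 0$ together with $w \succeq 0$ gives some $x_0 \in X$ with $w_{x_0} > 0$; then $w + t\delta_{x_0}$ remains nonzero and $\succeq 0$ for all $t$ in an open interval around $0$, so $t = 0$ is an interior maximum of $t \mapsto \Phi(w + t\delta_{x_0})$ and the derivative there must vanish, giving $(Z_X w)_{x_0} = 1$.

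I do not anticipate a real obstacle here: the arithmetic is routine, and the only point demanding attention is the asymmetry between the two conclusions — a one-sided optimality inequality at every coordinate (where $w$ may vanish, so only $t \geq 0$ is admissible) versus genuine interior stationarity at a coordinate where $w$ is strictly positive, the existence of such a coordinate being exactly what $\langle w, \mathbf{1}\rangle = |X|_+ > 0$ provides. Equivalently, the whole statement is the Karush--Kuhn--Tucker condition for the strictly convex quadratic program of minimizing $\langle v, v\rangle_{\cW_X}$ subject to $v \succeq 0$ and $\langle v, \mathbf{1}\rangle = 1$, with $(Z_X w)_x - 1$ (up to the positive scalar $|X|_+$) serving as the Lagrange multiplier of the constraint $v_x \geq 0$ and complementary slackness yielding the active coordinate; I would present whichever of the two versions is shorter in context.
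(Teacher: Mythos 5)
Your argument is correct: the computation of $\Phi(w+t\delta_x)$ and its derivative $2\bigl(1-(Z_Xw)_x\bigr)$ at $t=0$ is right, the one-sided feasibility of $w+t\delta_x$ for $t\geq 0$ legitimately yields $(Z_Xw)_x\geq 1$ at every coordinate, and the existence of a coordinate $x_0$ with $w_{x_0}>0$ (forced by $\langle w,\mathbf{1}\rangle=|X|_+>0$ and $w\succeq 0$) correctly upgrades the one-sided condition to genuine stationarity and hence equality there. In substance this is the same first-order optimality principle the paper uses, but your execution differs: the paper passes to the equivalent quadratic program of minimizing $\langle v,Z_Xv\rangle$ subject to $\langle v,\mathbf{1}\rangle=1$, $v\succeq 0$, invokes the Karush--Kuhn--Tucker theorem, and reads off $Z_Xw=\mathbf{1}+\tfrac{\langle w,\mathbf{1}\rangle}{2}\mu$ with $\mu\succeq 0$ and complementary slackness, whereas you derive the same conditions by hand from single-coordinate perturbations of the Rayleigh-type objective in \eqref{eq:opt-prob2}, using only \eqref{eq:m=d1=dd-f} and elementary calculus. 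What your route buys is self-containedness (no appeal to the KKT machinery or to constraint qualifications) and a fully explicit justification of the ``at least one entry equals $1$'' claim, which in the paper is compressed into the remark that some entries of $\mu$ vanish; what the paper's route buys is brevity and the explicit multiplier identity, which is also what gets reused for the numerical implementation. Your closing observation that the two formulations are equivalent is accurate, so either presentation is acceptable.
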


\begin{proof}
Let us substitute $\frac{1}{\langle w, \mathbf{1} \rangle}w$ in optimization problem in Definition \ref{def:weighting} by $v = (v_x)_{x \in X}$ and consider equivalent optimization problem
    \begin{align*}
    \underset{v \in \bR^X}{ \text{minimize} }\quad
    & \langle v, Z_X v \rangle \\
    \text{subject to}\quad
    & \langle v, \mathbf{1} \rangle = 1, \\
    & -v \preceq 0,
    \end{align*}
on $v$. 
Then the Lagrangian of this problem is
    \begin{equation*}
    L(v; \mu , \lambda) 
    = \langle v, Z_X v \rangle + \langle \mu, -v\rangle + \lambda \langle v, \mathbf{1} \rangle
    \quad (\mu=(\mu_x)_{x\in X} \succeq 0),
    \end{equation*}
and by Karush-Kuhn-Tucker theorem a minimizing $v$ satisfies
    \begin{equation*}
    2 Z_X v - \mu + \lambda \mathbf{1} = 0.
    \end{equation*}
where $\mu_x = 0$ if $v_x > 0$.
Returning to $w$ to obtain
    \begin{equation*}
    2 Z_X w - \langle w, \mathbf{1} \rangle \mu + \lambda \langle w, \mathbf{1} \rangle \mathbf{1} = 0,
    \end{equation*}
and taking dot product with $w$ we get in particular
    \begin{equation*}
    2 \langle w, Z_X w \rangle - \lambda \langle w, \mathbf{1} \rangle^2 = 0,
    \end{equation*}
because $\mu_x = 0$ if $w_x > 0$. 
Since $\langle w ,Z_X w\rangle = \langle w, \mathbf{1} \rangle$ by hypothesis we have $\lambda = \frac{2}{\langle w, \mathbf{1} \rangle}$, which leads to
    \begin{equation*}
    Z_X w = \mathbf{1} + \frac{\langle w, \mathbf{1} \rangle}{2} \mu. 
    \end{equation*}
Here some entries of $\mu$ are zero because $\langle w, \mathbf{1} \rangle = 1$.
\end{proof}

\begin{lem}\label{lem:dm-ineq-f}
For finite subsets $A$ and $B$ of a positive definite metric space $X$, we have    
\begin{multline*}
\| \wgp({A}) - \wgp({B}) \|_{\cW_X} \\
\leq \sqrt{\magnp(A\cup B) - \magnp(A)} + \sqrt{\magnp(A \cup B) - \magnp(B)}.
\end{multline*}
\end{lem}

\begin{proof}
The proof proceeds similarly as in the proof of Lemma \ref{lem:wm-ineq}. 
Only the equality $\langle Z_X \wg({A \cup B}), \wg(A) \rangle = \magn(A)$ need to be replaced by the inequality $\langle Z_X \wgp({A \cup B}), \wgp({A}) \rangle \geq \magnp(A)$, which can be obtained from Lemma \ref{lem:Zw+=1+} and the positivity of {\dispersion} $\wgp({A})$. 
\end{proof}

Now we can give an alternative definition of weighting, and the extended definition of {\dispersion}. 
\highlight{%
Recall that the concept of \textbf{net} generalizes that of sequence in a topological space
(see books on general topology, e.g., \mbox{\cite[{\S}11]{Willard2004}}).
}
Let $\cF_A$ denote the collection of finite subsets of $A$. 

\begin{prop} \label{prop:w-net}
Let $A$ be a compact subset of a positive definite metric space $X$. 
\begin{enumerate}
    \item The net $(\wg(B))_{B \in \cF_A}$ of weighting of finite subsets of $A$ converges in $\cW_X$ if and only if $\magn(A) < \infty$, in which case the limit $w \in \cW$ satisfies $Z_X w = 1$ on $A$. 
    \item The net $(\wgp({B}))_{B \in \cF_A}$ of {\dispersion} of finite subsets of $A$ converges in $\cW_X$ if and only if $\magnp(A) < \infty$, in which case the limit $w_+ \in \cW$ satisfies $Z_X w_+ \geq 1$ on $A$. 
\end{enumerate}
\end{prop}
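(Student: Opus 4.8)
The plan is to prove items (1) and (2) in parallel, since the {\dispersion} case is a verbatim translation of the weighting case once Lemma~\ref{lem:wm-ineq} is replaced by Lemma~\ref{lem:dm-ineq-f} and the identity $Z_X w_B = \mathbf{1}$ on $B$ (equation~\eqref{eq:Zw=1}) by the one-sided statement $Z_X w_{B,+} \succeq \mathbf{1}$ on $B$ (Lemma~\ref{lem:Zw+=1+}). I will use throughout that $\cF_A$ is directed by inclusion, that $\cW_X$ is complete by construction, that every finite subset of $X$ is compact so the cited lemmas apply, and that both $|\cdot|$ and $|\cdot|_+$ are monotone under inclusion (Proposition~\ref{prop:basics}).

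For the ``only if'' direction of (1), suppose the net $(w_B)_{B\in\cF_A}$ converges to some $w\in\cW_X$. Since $\|w_B\|_{\cW}^2 = \langle w_B, w_B\rangle_{\cW_X} = |B|$ by \eqref{eq:m=w1=ww-f}, for any fixed $B_0\in\cF_A$ I pass to the cofinal subnet $\{B\in\cF_A : B\supseteq B_0\}$ and use monotonicity to get $\|w\|_{\cW}^2 = \lim_B |B| \geq |B_0|$; taking the supremum over $B_0\in\cF_A$ yields $|A|\leq\|w\|_{\cW}^2<\infty$. The identical computation with \eqref{eq:m=d1=dd-f} in place of \eqref{eq:m=w1=ww-f} settles the ``only if'' direction of (2).

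For the ``if'' direction of (1), fix $\epsilon>0$. Since $|A|=\sup\{|B_0| : B_0\in\cF_A\}<\infty$, there is $B_0\in\cF_A$ with $|B_0|>|A|-\epsilon^2$. For any $B_1,B_2\in\cF_A$ containing $B_0$ we have $B_0\subseteq B_1\cup B_2\subseteq A$, so monotonicity gives $|B_1\cup B_2|-|B_i|\leq|A|-|B_0|<\epsilon^2$ for $i=1,2$, and Lemma~\ref{lem:wm-ineq} yields $\|w_{B_1}-w_{B_2}\|_{\cW}<2\epsilon$. Hence the net is Cauchy and, by completeness of $\cW_X$, converges to some $w\in\cW_X$. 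To identify the limit, fix $x\in A$: because $Z_X:\cW_X\to\cH_X$ is an isometry and point evaluation on $\cH_X$ is $1$-Lipschitz (as $K(x,x)=1$), we have $(Z_X w)(x)=\lim_B (Z_X w_B)(x)$, and $(Z_X w_B)(x)=1$ along the cofinal subnet $\{B\in\cF_A : x\in B\}$ by \eqref{eq:Zw=1}; so $Z_X w=1$ on $A$. Item (2) is obtained the same way: Lemma~\ref{lem:dm-ineq-f} gives the Cauchy estimate, and Lemma~\ref{lem:Zw+=1+} gives $(Z_X w_{B,+})(x)\geq1$ along $\{B\ni x\}$, hence $Z_X w_+\geq1$ on $A$ for the limit $w_+$.

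The estimates and the appeal to completeness are immediate from the lemmas already proved, so the only step demanding attention is the limit identification: one cannot assert $Z_X w_B = 1$ on all of $A$ for any single $B$, and must instead evaluate at one point $x$ at a time, restrict to the cofinal subnet of finite subsets containing $x$, and combine continuity of the isometry $Z_X$ with continuity of point evaluation in the RKHS $\cH_X$. I expect this to be the only delicate point.
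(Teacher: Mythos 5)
Your proof is correct and follows essentially the same route as the paper's: use Lemma~\ref{lem:wm-ineq} (resp.\ Lemma~\ref{lem:dm-ineq-f}) together with monotonicity to get the Cauchy estimate from a finite $B_0$ with $|B_0|>|A|-\epsilon^2$, invoke completeness of $\cW_X$, and identify the limit via pointwise convergence in the RKHS. Your treatment of the limit identification (fixing $x\in A$, passing to the cofinal subnet $\{B : x\in B\}$, and using continuity of point evaluation) is a slightly more explicit rendering of the step the paper states tersely, but it is the same argument.
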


\begin{proof}
We prove (1) only because (2) can be proved similarly.

If the net $(w_B)_{B \in \cF_A}$ converges in $\cW_X$, then $\langle \wg(B),\wg(B) \rangle_{\cW} = \magn(B)$ is bounded and hence $\magn(A) = \sup \{ \magn(B): B \in \cF_A \} < \infty$. 
Conversely, suppose $\magn(A)<\infty$ and a positive real number $\epsilon$ is given. 
Then, \highlight{by the definition of $\magn$ in general case}, there exists a \textit{finite} subset $B$ of $A$ such that $\magn(B) > \magn(A) - \epsilon^2$. 
For any pair $B_1, B_2$ of finite subsets of $A$ such that $B_1 , B_2 \supset B$, we have from Lemma \ref{lem:wm-ineq}, 
\begin{align*}
    \|\wg({B_1}) - \wg({B_2}) \| _{\cW} 
    & \leq \sqrt{\magn(B_1 \cup B_2 ) - \magn(B_1)} + \sqrt{\magn(B_1 \cup B_2 ) - \magn(B_2)} \\
    & \leq \sqrt{\magn(A) - \magn(B)} + \sqrt{\magn(A) - \magn(B)}  \leq 2 \epsilon
\end{align*}
hence the net $(\wg(B))_{B \in \cF_A}$ is Cauchy and so convergent because $\cW_X$ is a Hilbert space. 
Since $Z_X \wg(B) = 1$ on $B$ and the convergence in $\cH_X$ implies pointwise convergence by a general property of RKHS, the limit $w$ also satisfies $Z_X w=1$ on $B$.
\end{proof}
As a result of Proposition \ref{prop:w-net} we can define the {\dispersion} of (possibly infinite) compact positive definite metric space $X$ as the limit of {\dispersion} of finite subsets:
\begin{equation*}
\wgp({X}) := \lim _{Y \in \cF_X} \wgp({Y})
\end{equation*}

Let us check the properties \eqref{eq:m=d1=dd-f} of {\dispersion}. 
The equality $\magnp(X) = \langle \wgp(X), \wgp(X) \rangle _{\cW_X}$ follows from taking the limit of the same inequality for the finite subset case. 
Next, the counterpart of Proposition \ref{prop:wh=|A|} can be proved in similar way as follows. 

To elaborate, we need to prove $\magnp(A) = \langle \wgp({A}), h \rangle$ for compact subset $A$ of positive definite metric space $X$ and $h \in \cH_X$ such that $h=1$ on $A$. 
If $B$ is a finite subset of $A$, then we have $\langle \wgp({B}), h \rangle = \magnp(B)$ by equation \eqref{eq:m=d1=dd-f} because $h=1$ on $B$ as well. 
By taking the limit on $B$ over finite subsets of $A$ the proof is complete. 

The inequality $Z_X\wgp({A}) \geq 1$ has been proved in Proposition \ref{prop:w-net}. 

\begin{prop}\label{prop:dm-ineq}
For two compact subsets $A$ and $B$ of a positive definite metric space $X$, we have
\begin{multline*}
\| \wgp({A}) - \wgp({B})\|_{\cW_X} \\
 \leq \sqrt{\magnp(A\cup B) - \magnp(A)} 
+ \sqrt{\magnp(A \cup B) - \magnp(B)}.
\end{multline*}
\end{prop}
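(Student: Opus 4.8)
The plan is to deduce the inequality from its finite-subset version, Lemma \ref{lem:dm-ineq-f}, by passing to the limit over finite subsets, using that by construction $w_{A,+} = \lim_{Y \in \cF_A} w_{Y,+}$ and likewise for $B$ and $A \cup B$.

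First I would dispose of the degenerate case: if $|A \cup B|_+ = \infty$ the right-hand side is infinite and there is nothing to prove. So assume $|A \cup B|_+ < \infty$; then monotonicity forces $|A|_+, |B|_+ < \infty$ as well, and by Proposition \ref{prop:w-net}(2) the nets $(w_{Y,+})_{Y \in \cF_A}$, $(w_{Y,+})_{Y \in \cF_B}$ and $(w_{Y,+})_{Y \in \cF_{A \cup B}}$ all converge in $\cW_X$, so $w_{A,+}$, $w_{B,+}$, $w_{A \cup B,+}$ are all well defined.

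Next I would work over the single directed set $\cF_A \times \cF_B$ ordered coordinatewise by inclusion. For each $(A', B')$ in it both $A'$ and $B'$ are finite, so Lemma \ref{lem:dm-ineq-f} gives
\[
\| w_{A',+} - w_{B',+} \|_{\cW_X} \leq \sqrt{|A' \cup B'|_+ - |A'|_+} + \sqrt{|A' \cup B'|_+ - |B'|_+},
\]
the radicands being nonnegative by monotonicity. Then I would check that along this net each quantity converges to the corresponding one for $A$, $B$, $A \cup B$: (i) the projections $\cF_A \times \cF_B \to \cF_A$ and $\cF_A \times \cF_B \to \cF_B$ are monotone and cofinal, so $w_{A',+} \to w_{A,+}$ and $w_{B',+} \to w_{B,+}$ in $\cW_X$, whence $\| w_{A',+} - w_{B',+} \|_{\cW_X} \to \| w_{A,+} - w_{B,+} \|_{\cW_X}$ by continuity of the norm; (ii) $|A'|_+ \to |A|_+$ and $|B'|_+ \to |B|_+$ as monotone suprema over finite subsets; (iii) since every finite $C \subset A \cup B$ equals $(C \cap A) \cup (C \cap B)$ with $C \cap A \in \cF_A$ and $C \cap B \in \cF_B$, the family $\{A' \cup B' : (A',B') \in \cF_A \times \cF_B\}$ is cofinal in $\cF_{A \cup B}$, so $|A' \cup B'|_+ \to |A \cup B|_+$. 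Passing to the limit in the displayed inequality, which is legitimate because $\sqrt{\,\cdot\,}$ and addition are continuous and the limits on both sides exist, yields the claim.

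The only point requiring genuine attention is the cofinality in step (iii): one must be sure that $|A' \cup B'|_+$ genuinely exhausts $|A \cup B|_+$ as $(A',B')$ ranges over the product net, rather than only a smaller cofinal family of finite subsets of $A \cup B$. This is settled by the elementary remark that a finite subset of a union is the union of its finite traces on the two pieces, so there is no analytic difficulty; everything else is continuity of the maps involved together with monotone convergence of maximum diversity along finite subsets.
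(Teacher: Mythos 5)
Your argument is correct and takes essentially the same route as the paper: apply Lemma \ref{lem:dm-ineq-f} to finite subsets and pass to the limit, using that the diversifier of a compact set is by definition the limit of the net of diversifiers of its finite subsets. The only (harmless) difference is that you justify convergence of the diversity terms directly from their definition as suprema over finite subsets (via the product net $\cF_A \times \cF_B$ and the cofinality of $\{A' \cup B'\}$ in $\cF_{A\cup B}$), whereas the paper invokes the continuity of maximum diversity (Proposition \ref{prop:mm-conti}); your write-up fills in the limiting details the paper leaves implicit.
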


\begin{proof}
This follows from Lemma \ref{lem:dm-ineq-f} by taking limit over finite subsets, because of the definition of {\dispersion} in the general case and the continuity of maximum diversity (Proposition \ref{prop:mm-conti}).
\end{proof}

\begin{prop} \label{prop:mw-diff-estimate}
Let $A$ be a compact subset of a positive definite metric space $X$, and $\magnp$ be continuous at $A$. 
Then {\dispersion} $w_{-,+} : \mathcal{C}_X \rightarrow \cW_X$ is also continuous at $A$.  
Specifically, for two compact subsets $A$ and $B$ of $X$, we have
\begin{equation} \label{eq:dm-dH-ineq}
\| \wgp({A})-\wgp({B})\|_{\cW_X} \leq 4 \sqrt{2}\, \magnp( A \cup B) \, d_H(A,B)^{\frac{1}{2}}
\end{equation}
\end{prop}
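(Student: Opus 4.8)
The plan is to feed the Lipschitz-type estimate for maximum diversity from Proposition~\ref{prop:mm-conti}(2) into the already-proved comparison bound of Proposition~\ref{prop:dm-ineq}, which says $\|w_{A,+}-w_{B,+}\|_{\cW_X}\le \sqrt{|A\cup B|_+-|A|_+}+\sqrt{|A\cup B|_+-|B|_+}$. So the whole task reduces to bounding the two ``defects'' $|A\cup B|_+-|A|_+$ and $|A\cup B|_+-|B|_+$ by a constant multiple of $d_H(A,B)$, after which one just takes square roots and adds.

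First I would record the elementary fact that $d_H(A,A\cup B)\le d_H(A,B)$ and $d_H(B,A\cup B)\le d_H(A,B)$: since $A\subseteq A\cup B$ the one-sided distance from $A$ into $A\cup B$ is $0$, while the reverse one-sided distance is $\max_{b\in B}d(b,A)\le d_H(A,B)$. Next, apply Proposition~\ref{prop:mm-conti}(2) with $A\cup B$ as the reference space and $A$ as the perturbation: the lower bound there reads $1-8|A\cup B|_+\,d_H(A\cup B,A)\le |A|_+/|A\cup B|_+$, and multiplying through by the positive number $|A\cup B|_+$ and using the previous step gives $|A\cup B|_+-|A|_+\le 8|A\cup B|_+^2\,d_H(A,B)$; the same argument with $B$ in place of $A$ gives $|A\cup B|_+-|B|_+\le 8|A\cup B|_+^2\,d_H(A,B)$. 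Substituting these into Proposition~\ref{prop:dm-ineq} and simplifying $2\sqrt 8=4\sqrt 2$ yields \eqref{eq:dm-dH-ineq}. The continuity statement is then immediate: as $d_H(A,B)\to 0$ we also have $d_H(A,A\cup B)\to 0$, so $|A\cup B|_+\to |A|_+<\infty$ by Proposition~\ref{prop:mm-conti}(2), and hence the right-hand side of \eqref{eq:dm-dH-ineq} tends to $0$.

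I do not expect a serious obstacle: all the analytic content has already been isolated in Propositions~\ref{prop:dm-ineq} and~\ref{prop:mm-conti}, and what remains is bookkeeping. The one point requiring a little care is to apply the maximum-diversity estimate in the correct direction, i.e.\ with $A\cup B$ (not $A$) as the reference space, so that the prefactor comes out as $|A\cup B|_+$ and the defect has the right sign; finiteness of $|A\cup B|_+$, needed in order to multiply through, holds because $A\cup B$ is compact. If one preferred a bound phrased purely in terms of $|A|_+$ and $d_H(A,B)$, one could additionally invoke the upper inequality of Proposition~\ref{prop:mm-conti}(2) to replace $|A\cup B|_+$ by $|A|_+/(1-8|A|_+d_H(A,B))$, but that refinement is not needed for the stated form.
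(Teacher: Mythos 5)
Your proposal is correct and follows essentially the same route as the paper: both feed the bound $1-\frac{|A|_+}{|A\cup B|_+}\leq 8|A\cup B|_+\,d_H(A,A\cup B)$ from Proposition~\ref{prop:mm-conti}(2) (with $A\cup B$ as reference) and the inequalities $d_H(A,A\cup B),\,d_H(B,A\cup B)\leq d_H(A,B)$ into Proposition~\ref{prop:dm-ineq}, arriving at the constant $4\sqrt{2}$ in the same way. The only difference is cosmetic --- you clear the denominator to bound the defects $|A\cup B|_+-|A|_+$ directly, whereas the paper factors out $\sqrt{|A\cup B|_+}$ first --- and your closing remark on finiteness of $|A\cup B|_+$ and the continuity conclusion matches what the paper leaves implicit.
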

\begin{proof}
The proof is straight from Proposition \ref{prop:mm-conti} and \ref{prop:dm-ineq}. 
Indeed, the inequality \eqref{eq:dm-dH-ineq} follows from
\begin{align*}
\| \wgp({A}) &- \wgp({B}) \|  \\
& \leq \sqrt{\magnp(A\cup B) - \magnp(A)} + \sqrt{\magnp(A \cup B) - \magnp(B)} \\
& = \sqrt{ \magnp(A \cup B) } \cdot \left( \sqrt{1 - \frac{\magnp(A)}{\magnp(A\cup B) }} + \sqrt{1-\frac{\magnp(B)}{\magnp(A\cup B)}}\right) \\
& \leq \sqrt{ \magnp(A \cup B) } \cdot \\
& \qquad \left( \sqrt{ 8 \magnp(A\cup B) d_H(A,A\cup B)} + \sqrt{ 8\magnp(A\cup B) d_H(B,A\cup B)}\right)  \\
& \leq \sqrt{ \magnp(A \cup B) } \cdot \\
& \qquad \left( \sqrt{ 8\magnp(A\cup B) d_H(A,B)} + \sqrt{ 8\magnp(A\cup B) d_H(A,B)}\right)  \\
& = 4 \sqrt{2} \, \magnp(A \cup B) \, \sqrt{d_H(A,B)}
\end{align*}
\end{proof}

From the above proposition we immediately obtain a large family of continuous functions on $\cC_X$ as follows.
\begin{cor} \label{cor:1}
For each $h \in \cH_X$, the function $\varphi_h:\cC_X \rightarrow \bR$ defined by $\varphi_h(A) := \langle \wgp({A}), h \rangle$ is continuous with respect to Hausdorff distance. 
\end{cor}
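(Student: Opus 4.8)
The plan is to deduce continuity of $\varphi_h$ directly from the Lipschitz-type estimate for the {\dispersion} in Proposition \ref{prop:mw-diff-estimate}, together with the Cauchy–Schwarz inequality in $\cW_X$ and the local boundedness of maximum diversity. Fix $h \in \cH_X$. For compact subsets $A, B$ of $X$, write
\begin{equation*}
|\varphi_h(A) - \varphi_h(B)| = |\langle w_{A,+} - w_{B,+}, h \rangle| = |\langle Z_X(w_{A,+} - w_{B,+}), h \rangle_{\cH_X}| \leq \|w_{A,+} - w_{B,+}\|_{\cW_X} \, \|h\|_{\cH_X},
\end{equation*}
using that $Z_X : \cW_X \to \cH_X$ is an isometry and the pairing satisfies $\langle w, h \rangle = \langle Z_X w, h\rangle_{\cH_X}$. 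Then apply Proposition \ref{prop:mw-diff-estimate} to bound the right-hand side by $4\sqrt{2}\,|A \cup B|_+ \, d_H(A,B)^{1/2} \, \|h\|_{\cH_X}$.

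The remaining point is to handle the factor $|A \cup B|_+$, which is not \emph{a priori} bounded as $B$ ranges over all compact subsets near $A$; but continuity is a local statement, so this is not a real obstacle. Fix a compact subset $A$ and $\epsilon > 0$. By continuity of maximum diversity at $A$ (Proposition \ref{prop:mm-conti}(2)), there is a $\delta_0 > 0$ such that $|C|_+ \leq |A|_+ + 1$ whenever $d_H(A, C) < \delta_0$; since $d_H(A, A \cup B) \leq d_H(A,B)$, requiring $d_H(A,B) < \delta_0$ gives $|A \cup B|_+ \leq |A|_+ + 1 =: M$. Now choose $\delta = \min\{\delta_0,\ (\epsilon / (4\sqrt 2\, M\, (\|h\|_{\cH_X}+1)))^2\}$; then for every compact $B$ with $d_H(A,B) < \delta$ we obtain $|\varphi_h(A) - \varphi_h(B)| < \epsilon$, proving continuity at $A$. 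Since $A$ was arbitrary, $\varphi_h$ is continuous on $\cC_X$.

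I do not expect any genuine difficulty here — the corollary is essentially a one-line consequence of the isometry property of $Z_X$, Cauchy–Schwarz, and Proposition \ref{prop:mw-diff-estimate}. The only care needed is the standard localization trick to control $|A \cup B|_+$, and noting that $\|h\|_{\cH_X}$ is finite because $h \in \cH_X$ by hypothesis. If one wants a cleaner statement, one could alternatively record a local Hölder-$\tfrac12$ estimate: for each compact $A$ there exist a neighborhood and a constant $L = L(A, h)$ with $|\varphi_h(A) - \varphi_h(B)| \leq L\, d_H(A,B)^{1/2}$ for all $B$ in that neighborhood, from which pointwise continuity is immediate.
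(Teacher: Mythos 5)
Your argument is correct and is exactly the one the paper intends: the corollary is stated as an immediate consequence of Proposition \ref{prop:mw-diff-estimate} with no written proof, and your Cauchy--Schwarz step via the isometry $Z_X$ together with the local bound on $|A\cup B|_+$ (using $d_H(A,A\cup B)\le d_H(A,B)$ and continuity of $|\cdot|_+$) supplies precisely the missing details. Nothing further is needed.
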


\begin{rmk}
\highlight{%
In the light of data analysis, we may consider $\varphi_h$ of Corollary \mbox{\ref{cor:1}} an invariant for point clouds (by setting $A \in \mathcal{C}_X$ to be finite), as alluded to in Example \mbox{\ref{ex:double-s}}. 
Point cloud invariants are also studied in \mbox{\cite{AnosovaKurlin}}, but the invariants treated in \mbox{\cite{AnosovaKurlin}} are not continuous with respect to Hausdorff distance unless the cardinality is fixed. 
On the other hand, there are other point cloud invariants used in TDA(topological data analysis) studies (e.g., \mbox{\cite{SDB2016}}).
} 
\end{rmk}

For the sake of completeness, we reformulate a fact about {\dispersion} previously known in \cite[Proposition 2.9]{Meckes2015}.

\begin{prop}
For a compact, positive definite metric space $X$, its {\dispersion} $w_{X,+}$ is a (nonnegative) measure on $X$. 
\end{prop}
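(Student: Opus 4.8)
The plan is to realize the abstract element $w_{X,+}=\lim_{Y\in\cF_X}w_{Y,+}\in\cW_X$ concretely via a weak-$*$ compactness argument in the space $M(X)$ of finite signed Borel measures on $X$. Since we are assuming $w_{X,+}$ exists, Proposition~\ref{prop:w-net}(2) already supplies $|X|_+<\infty$, so every finite subset $Y\subset X$ has $\langle w_{Y,+},\mathbf{1}\rangle=|Y|_+\le|X|_+$; thus $(w_{Y,+})_{Y\in\cF_X}$ is a net of \emph{nonnegative} finitely supported measures on the compact space $X$ with uniformly bounded total mass.

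First I would extract a weak-$*$ convergent subnet. Identifying $M(X)=C(X)^*$ by the Riesz representation theorem, the norm-bounded set $\{\mu\in M(X):\mu\succeq 0,\ \mu(X)\le|X|_+\}$ is weak-$*$ compact by Banach--Alaoglu, so the net $(w_{Y,+})_{Y\in\cF_X}$ has a subnet converging weak-$*$ to some nonnegative $\nu\in M(X)$. (If one prefers to avoid subnets, separability and total boundedness of $X$ let one run the same argument along an increasing sequence $Y_1\subseteq Y_2\subseteq\cdots$ with $\bigcup_nY_n$ dense, using $d_H(Y_n,X)\to0$, Proposition~\ref{prop:mm-conti}(2), and Lemma~\ref{lem:dm-ineq-f} to see $(w_{Y_n,+})_n$ is $\cW_X$-Cauchy with limit $w_{X,+}$.)

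Next I would match the two limits on the RKHS $\cH_X$. Fix $h\in\cH_X$, which is in particular continuous on $X$. Along the chosen subnet, weak-$*$ convergence gives $\langle w_{Y,+},h\rangle=\int_X h\,\mathrm{d}w_{Y,+}\to\int_X h\,\mathrm{d}\nu$, while the $\cW_X$-convergence $w_{Y,+}\to w_{X,+}$ (valid for the whole net, hence for the subnet) together with continuity of the pairing $\langle\cdot,\cdot\rangle\colon\cW_X\times\cH_X\to\bR$ gives $\langle w_{Y,+},h\rangle\to\langle w_{X,+},h\rangle$. Therefore $\langle w_{X,+},h\rangle=\int_X h\,\mathrm{d}\nu$ for every $h\in\cH_X$; taking $h=K(x,\cdot)=e^{-d(x,\cdot)}$ and invoking the reproducing property yields $(Z_Xw_{X,+})(x)=\int_X e^{-d(x,y)}\,\mathrm{d}\nu(y)$ for all $x\in X$. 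Since $Z_X\colon\cW_X\to\cH_X$ is an isometry onto its range, an element of $\cW_X$ is determined by the functional it induces on $\cH_X$; hence $w_{X,+}$ acts precisely as integration against the nonnegative measure $\nu$, which is the assertion. Nonnegativity of $w_{X,+}$ is immediate from $\nu\succeq0$.

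The step I expect to demand the most care is this final identification: $w_{X,+}$ lives in the abstract completion $\cW_X$, whereas $\nu$ is produced in the concrete space $M(X)$, and a priori there is no map relating them. The bridge is that both induce the \emph{same} bounded linear functional on $\cH_X$ — one through the continuously extended pairing $\langle\cdot,\cdot\rangle$, the other through integration against a continuous function — and that $w\mapsto\langle w,\cdot\rangle$ is injective on $\cW_X$ (being $\mathrm{Riesz}\circ Z_X$). Matching \emph{all} pairings against $\cH_X$ is what certifies that $\nu$ genuinely represents the $\cW_X$-limit rather than merely being a weak-$*$ accumulation point; uniqueness of $\nu$, if desired, then follows from injectivity of $\mu\mapsto Z_X\mu$ on finite measures for the positive definite kernel $e^{-d}$.
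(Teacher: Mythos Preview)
Your argument is correct and follows the same skeleton as the paper's: bound the total masses of $(w_{Y,+})_{Y\in\cF_X}$ by $|X|_+$, invoke Banach--Alaoglu to extract a weak-$*$ convergent subnet in $M(X)=C(X)^*$, and then identify the weak-$*$ limit with the $\cW_X$-limit $w_{X,+}$. Your identification step---matching the induced functionals on $\cH_X$ and appealing to injectivity of $w\mapsto\langle w,\cdot\rangle$---is in fact more carefully argued than the paper's, which asserts (correctly, but with only a one-line justification via the total-variation bound) that weak-$*$ convergence of the measures already implies $\cW_X$-convergence.
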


\begin{proof}
Let $(\wgp({B}))_{B \in \cF_X}$ be the net of the {\dispersion s} of finite subsets of $X$. 
For each $B \in \cF_X$ we have
\begin{equation*}
    \langle \wgp({B}), \mathbf{1}_B  \rangle = \magnp(B) \leq \magnp(X)
\end{equation*}
This means that the set $\{\wgp({B}): B \in \cF_X \}$, as a subset of nonnegative measures on $X$, is bounded in total variation norm. 
Since each measure on $X$ is an element of dual space of $C(X)$, by Alaoglu's theorem, the net $(\wgp({B}))_{B \in \cF_X}$ possesses a convergent subnet. 
But we have, for any $\mu \in \cW_X$ which is a measure on $X$ as well,
\begin{equation*}
\langle \mu, \mu \rangle _{\cW} \leq  (\text{total variation of } \mu)^2
\end{equation*}
which means that the convergence as measures implies that as elements of $\cW$ as well. 
Therefore the limit of the subnet must be $\wgp({X})$.
\end{proof}

\section{A new invariant of periodic time series}
\label{sec:application}

As an application, we will suggest a family of invariants of periodic time series satisfying the following condition: 
\begin{quote}
Given periodic time series with period $T$, the invariant calculated from entire time series is equal to the invariant calculated from its restriction to any interval with duration $\geq T$. 
\end{quote}
The point is that the duration may not be integer multiple of $T$, in which case the sampled interval is biased to certain part of a period. 
Besides maxima, minima, and compositions thereof which already satisfy the condition above, a new kind will be introduced.

\subsection{Time-delay embedding and magnitude theory}
\label{subsec:time-series}

\begin{figure*}
\centering
\includegraphics[width=0.99\textwidth]{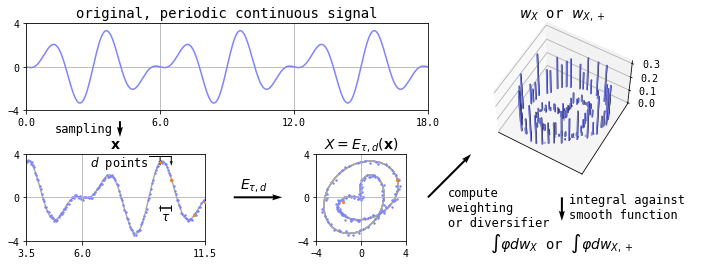}
\caption{Calculation process of weighting and diversifier integrals at a glance. 
}
\label{fig:example-pipeline}
\end{figure*}
Let $\mathbf{s}(t):\bR \rightarrow \bR$ be a continuous function, which will be regarded as a continuous time series. 
Then the \textbf{time-delay embedding} of $\mathbf{s}$ with lag $\tau$ into dimension $d$ is defined as a function 
\begin{align*}
   E_{\tau,d}: \bR &\rightarrow \bR^d \\
    t &\rightarrow (\mathbf{s}(t),\ \mathbf{s}(t+\tau),\ \cdots,\ \mathbf{s}(t+(d-1)\tau).
\end{align*}
In case of finite discrete signal $\hat{\mathbf{s}} = (s_1,\,s_2,\,\cdots,\,s_{L})$ whose sampling frequency $f$ is integer multiple of $\tau^{-1}$, its time-delay embedding is defined in the same way as
\begin{equation*}
    E_{\tau,d}(\hat{\mathbf{s}})
    = \left\{
    \begin{pmatrix}
    s_1 \\ s_{1+f\tau} \\ \vdots \\ s_{1+(d-1)f\tau}
    \end{pmatrix},\, \cdots,\,
    \begin{pmatrix}
    s_{L-{(d-1)f\tau}} \\ s_{L-(d-2)f\tau} \\ \vdots \\ s_{L}
    \end{pmatrix}
    \right\}.
\end{equation*}
Note that in this case, the length shrinks by $(d-1)f\tau$ through embedding.

\highlight{%
Time-delay embedding transforms time series into a point set of a higher dimensional space, 
and any quantity calculated on the point set contains information of the original time series (as already observed by TDA researchers, e.g., in \mbox{\cite{SDB2016}}). 
In particular, weighting and \mbox{\dispersion} of the point set are invariants of given time series. 
Therefore, we suggest the following concepts, but with modifications considering practical computation: we assume given time series is discrete, and we take the integral against a fixed function to obtain scalar values. 
}
\begin{defn}
Let $\hat{\mathbf{s}} = (s_1,\,s_2,\,\cdots,\,s_{L})$ be a finite discrete signal, $E(\hat{\mathbf{s}}) = E_{\tau,d}(\hat{\mathbf{s}})$ its time-delay embedding and $\varphi : \bR^d \rightarrow \bR$ a smooth function. 
The \textbf{weighting integral} (resp. \textbf{diversifier integral}) is the integral
\begin{equation}\label{eq:w-integral}
    \int_{\bR^d} \varphi \, d\wg({E(\hat{\mathbf{s}})})
    \qquad (\text{resp. }
    \int_{\bR^d} \varphi \, d\wgp({E(\hat{\mathbf{s}})})\ )
\end{equation}
of $\varphi$ against the weighting (resp. {\dispersion}) of $E(\hat{\mathbf{s}})$.
\end{defn} 
In spite of different notation, the integrals above are the invariants mentioned in Corollary \ref{cor:1} because every smooth function $\varphi:\bR^d \rightarrow \bR$ belongs to $\cH_X$ when $X = \bR^d$ (see Remark \ref{rmk:R^d}).

\highlight{%
Given a discrete signal $\hat{\mathbf{s}}$ sampled from a continuous periodic signal $\mathbf{s}$, the corresponding integral $\int_{\bR^d} \varphi \, d\wg({E(\hat{\mathbf{s}})})$ is different from $\int_{\bR^d} \varphi \, d\wg({E({\mathbf{s}})})$ in general because the set $E(\hat{\mathbf{s}})$ is different from $E({\mathbf{s}})$. 
However, their Hausdorff distance $d(E(\hat{\mathbf{s}}),E(\mathbf{s}))$ can be made small enough under relatively weak conditions, and so the difference $\int_{\bR^d} \varphi \, d\wg({E(\hat{\mathbf{s}})}) - \int_{\bR^d} \varphi \, d\wg({E({\mathbf{s}})})$ can be made small as well. 

}

\begin{prop}[Stability of weighting integral]\label{prop:main-w}
Suppose that $\mathbf{s}(t)$ is a periodic, univariate, differentiable function with period $T$, and a time-delay embedding $E=E_{\tau,d}$ is fixed. 
Let $\hat{\mathbf{s}}=(s_1,\, s_2,\, \cdots,\, s_{L})$ be a segment sampled from $\mathbf{s}$ with sampling frequency $f$ without error, i.e.
\begin{equation*}\label{eq:sampling-w}
s_k = \mathbf{s} ( t^o + {k}{f^{-1}} )
\qquad(k=1,\, 2,\, \cdots,\, L)
\end{equation*}
where $t^o$ is arbitrary. 
Suppose further that $f$ is an integer multiple of $\tau^{-1}$ and $Lf^{-1}- (d-1)\tau \geq T $, i.e. after time-delay embedding $E(\hat{\mathbf{s}})$ retains at least one period.

Then, for each smooth function $\varphi:\bR^d \rightarrow \bR$, the weighting integral $\int_{\bR^d} \varphi\, d\wg({E(\hat{\mathbf{s}})})$ converges as $f \rightarrow \infty$.
\end{prop}

\highlight{%
The proof of the above proposition is in Appendix \mbox{\ref{sec:proofs}}. 
In the case of m.d. weighting integral we have stronger result as below, whose proof is also in Appendix \mbox{\ref{sec:proofs}}.
}

\begin{prop}[Stability of {\dispersion} integral]\label{prop:main}
Suppose that $\mathbf{s}(t)$ is a periodic, univariate, differentiable function with period $T$, and a time-delay embedding $E=E_{\tau,d}$ is fixed. 
Let $\hat{\mathbf{s}}=(s_1,\, s_2,\, \cdots,\, s_{L})$ be a segment sampled from $\mathbf{s}$ with sampling frequency $f$ and error bound $\delta$, i.e.
    \begin{align}\label{eq:sampling}
    s_k = \mathbf{s} &( t^o + {k}{f^{-1}} ) + e_{k},
    \quad
    |e_{k}| \leq \delta
    \\ \nonumber
    &(k=1,\, 2,\, \cdots,\, L)
    \end{align}
where $t^o$ is arbitrary. 
Suppose further that $f$ is an integer multiple of $\tau^{-1}$ and $Lf^{-1}- (d-1)\tau \geq T $, i.e. after time-delay embedding $E(\hat{\mathbf{s}})$ retains at least one period.

Then, for each smooth function $\varphi:\bR^d \rightarrow \bR$, the {\dispersion} integral $\int_{\bR^d} \varphi\, dw_{E(\hat{\mathbf{s}}),+}$ converges as $f \rightarrow \infty$ and $\delta \rightarrow 0$, with convergence rate $O\left((\delta + \|\mathbf{s}'\|_{\sup} f^{-1} )^{\frac{1}{2}}\right)$. 
\end{prop}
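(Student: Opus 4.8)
The plan is to identify the limit explicitly and then quantify Corollary~\ref{cor:1}. Write $\Gamma := \{E_{\tau,d}(\mathbf{s})(t) : t \in \bR\} \subset \bR^d$ for the image of the continuous time-delay embedding of one period; $\Gamma$ is compact (a continuous image of the circle) and positive definite (a subset of $\bR^d$), so the {\dispersion} $w_{\Gamma,+}$ exists by Proposition~\ref{prop:w-net} and, being a measure by the last proposition of Section~\ref{sec:continuity}, defines $\int_{\bR^d}\varphi\,dw_{\Gamma,+} = \langle w_{\Gamma,+},\varphi\rangle$. I claim the {\dispersion} integral converges to this value (note the value is independent of $t^o$). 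Since $w_{E(\hat{\mathbf{s}}),+}$ and $w_{\Gamma,+}$ are both measures and smooth $\varphi$ lies in $\cH_{\bR^d}$ (Remark~\ref{rmk:R^d} and the discussion after the definition of the {\dispersion} integral), the isometry $Z_X:\cW_X\to\cH_X$ together with Cauchy--Schwarz gives
\[
\Bigl| \int \varphi\, dw_{E(\hat{\mathbf{s}}),+} - \int \varphi\, dw_{\Gamma,+} \Bigr|
= \bigl| \langle w_{E(\hat{\mathbf{s}}),+} - w_{\Gamma,+},\, \varphi \rangle \bigr|
\leq \| w_{E(\hat{\mathbf{s}}),+} - w_{\Gamma,+} \|_{\cW_{\bR^d}}\; \|\varphi\|_{\cH_{\bR^d}},
\]
and by Proposition~\ref{prop:mw-diff-estimate} the right-hand side is at most $4\sqrt{2}\,|E(\hat{\mathbf{s}})\cup\Gamma|_+\, d_H(E(\hat{\mathbf{s}}),\Gamma)^{1/2}\,\|\varphi\|_{\cH_{\bR^d}}$. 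So it remains to bound the Hausdorff distance $d_H(E(\hat{\mathbf{s}}),\Gamma)$ and the prefactor $|E(\hat{\mathbf{s}})\cup\Gamma|_+$.

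For the Hausdorff distance, one direction is immediate: by \eqref{eq:sampling} and $f\tau^{-1}\in\mathbb{Z}$, the sample point of $E(\hat{\mathbf{s}})$ indexed by $k$ differs coordinatewise by at most $\delta$ from $E_{\tau,d}(\mathbf{s})(t^o+kf^{-1})\in\Gamma$, hence lies within $\sqrt{d}\,\delta$ of $\Gamma$. For the other direction, take $E_{\tau,d}(\mathbf{s})(t)\in\Gamma$: the base times $t^o+kf^{-1}$ of the embedded samples (for $k=1,\dots,L-(d-1)f\tau$) run over an interval of length $Lf^{-1}-(d-1)\tau-f^{-1}\geq T-f^{-1}$ with consecutive spacing $f^{-1}$, so modulo $T$ they are $f^{-1}$-dense in the circle $\bR/T\mathbb{Z}$; pick $k$ and $m\in\mathbb{Z}$ with $|t^o+kf^{-1}+mT-t|\leq f^{-1}$, use periodicity $E_{\tau,d}(\mathbf{s})(t^o+kf^{-1}+mT)=E_{\tau,d}(\mathbf{s})(t^o+kf^{-1})$, and apply the mean value theorem coordinatewise (coordinate $j$ is $\mathbf{s}(\cdot+j\tau)$, with derivative bounded by $\|\mathbf{s}'\|_{\sup}$) to conclude that $E_{\tau,d}(\mathbf{s})(t)$ lies within $\sqrt{d}\,(\delta+\|\mathbf{s}'\|_{\sup}f^{-1})$ of the $k$-th sample point. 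Hence $d_H(E(\hat{\mathbf{s}}),\Gamma)\leq \sqrt{d}\,(\delta+\|\mathbf{s}'\|_{\sup}f^{-1})$. For the prefactor, $d_H(E(\hat{\mathbf{s}})\cup\Gamma,\Gamma)\leq d_H(E(\hat{\mathbf{s}}),\Gamma)\to 0$, so by the continuity estimate in Proposition~\ref{prop:mm-conti} we get $|E(\hat{\mathbf{s}})\cup\Gamma|_+\leq 2|\Gamma|_+$ once $\delta$ and $f^{-1}$ are small enough (here $|\Gamma|_+<\infty$ because $\Gamma$ is a compact subset of $\bR^d$, equivalently because Proposition~\ref{prop:mm-conti} treats maximum diversity as finite on $\cC_{\bR^d}$). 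Substituting both bounds into the display yields
\[
\Bigl| \int \varphi\, dw_{E(\hat{\mathbf{s}}),+} - \int \varphi\, dw_{\Gamma,+} \Bigr|
\leq 8\sqrt{2}\,|\Gamma|_+\, d^{1/4}\,\|\varphi\|_{\cH_{\bR^d}}\,\bigl(\delta+\|\mathbf{s}'\|_{\sup}f^{-1}\bigr)^{1/2},
\]
which is precisely convergence with rate $O\bigl((\delta+\|\mathbf{s}'\|_{\sup}f^{-1})^{1/2}\bigr)$.

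The main obstacle is the second half of the Hausdorff-distance estimate: it is the only place where the hypotheses ``$\mathbf{s}$ periodic with period $T$'' and $Lf^{-1}-(d-1)\tau\geq T$ are genuinely used, namely to guarantee that the sampling times, once reduced modulo $T$, sweep out the \emph{entire} periodic orbit up to gaps of size $f^{-1}$; this is exactly what makes the limit $\int\varphi\,dw_{\Gamma,+}$ independent of $t^o$ and of which $\geq T$ window was sampled, so it is the heart of the invariance claim of Section~\ref{sec:application}. The bookkeeping with the index range $k=1,\dots,L-(d-1)f\tau$ of the embedding and the off-by-one in the interval length $T-f^{-1}$ is where care is needed; everything else is a mechanical combination of Proposition~\ref{prop:mw-diff-estimate}, Cauchy--Schwarz in $\cH_{\bR^d}$, and continuity of maximum diversity.
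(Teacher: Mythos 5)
Your proposal is correct and follows essentially the same route as the paper's proof: bound $d_H$ between the discrete embedded set and the continuous embedded curve by $\sqrt{d}\,(\delta+\|\mathbf{s}'\|_{\sup}f^{-1})$, then combine Proposition~\ref{prop:mw-diff-estimate} with Cauchy--Schwarz in $\cW_{\bR^d}$, and control the prefactor $|A\cup\hat A|_+$ for small perturbations. The only (cosmetic) differences are that you bound the prefactor via the quantitative continuity of maximum diversity in Proposition~\ref{prop:mm-conti} rather than the paper's monotonicity argument with the closed radius-$1$ neighborhood of the curve, and you spell out the density-mod-$T$ bookkeeping and the identification of the limit more explicitly.
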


\highlight{%
As seen above, the stability weighting integrals, unlike m.d. weighting integrals, requires the discrete signal $\mathbf{s}$ be sampled \textit{with no errors from a periodic function}. 
Real time series data would hardly meet such a requirement. 
Nevertheless, numerical examples in the following subsections 4.2-4.3 and Section 5 shows weighting integrals work as well as m.d. weighting integrals.
}

\highlight{%
Proofs of above two propositions proceeds roughly as follows. 
By the periodicity of $\mathbf{s}$, $E(\mathbf{s})$ is a (possibly non-simple) closed curve. 
And the set $E(\hat{\mathbf{s}})$ is a finite set close in Hausdorff distance to $E(\mathbf{s})$ provided $Lf^{-1} -(d-1)\tau \geq T$. 
Then Corollary \mbox{\ref{cor:1}} implies that $\int_{\bR^d} \varphi \, d\wg({E(\hat{\mathbf{s}})})$ can be made close enough to $\int_{\bR^d} \varphi \, d\wg({E({\mathbf{s}})})$. 
(For this reason $\wg$ or $\wgp$ cannot be replaced by other measures; e.g., by the uniform measure.)

The approximation of $E(\mathbf{s})$ by $E(\hat{\mathbf{s}}_1)$ and $E(\hat{\mathbf{s}}_2)$ depends only on the sampling frequency and sampling errors. 
A critical point is that the approximation is essentially irrelevant to the length (enough if $Lf^{-1} -(d-1)\tau \geq T$) and phase (the beginning of the segments $\hat{\mathbf{s}}_1$, $\hat{\mathbf{s}}_2$ need not be fixed). 
Due to this, a (m.d.) weighting integral $\int_{\bR^d} \varphi d\wg(E(\hat{\mathbf{s}}))$ of sampled discrete segment $\hat{\mathbf{s}}$ approximates $\int_{\bR^d} \varphi d\wg(E({\mathbf{s}}))$ of the original periodic signal, under little constraints on the length and phase of $\hat{\mathbf{s}}$. 
This property will be more closely observed in Section \mbox{\ref{subsec:toy-examples}}. 
}


For $\varphi$, as a default, we may choose $\varphi_\xi(x) = \cos 2\pi (\xi\cdot x)$ or $\varphi_\xi(x) = \sin 2\pi (\xi\cdot x)$ and consider modulus of Fourier transform $| \widehat{w}(\xi)|  = \left| \int_{\bR^d} e^{-2i\pi (\xi\cdot x)}d\wg({E(\hat{\mathbf{s}})}) \right|$. 
In this case, the integrals $\left\{ \int_{\bR^d} e^{-2i\pi (\xi\cdot x)}d\wg({E(\hat{\mathbf{s}})}) \right\}$ contains sufficient information of $E(\hat{\mathbf{s}})$, in the sense that they can be linearly combined to combinations that approximate any given weighting integral. 
(this is because the collection $\{\varphi_\xi\}$ of functions spans dense,in $\cW =H^{-\frac{d+1}{2}}(\bR^d)$ for example, from Fourier inversion theorem)
Even though it depends on the downstream ML model and the number of $\varphi$'s used whether such linear combinations can be learned, experimental results in Section \ref{sec:experiment} show that this default is sufficient, at least, for the application of interest in this paper.

\subsection{Examples with synthetic data}
\label{subsec:toy-examples}
\begin{figure*}
  \centering
  \includegraphics[width=0.98\textwidth]{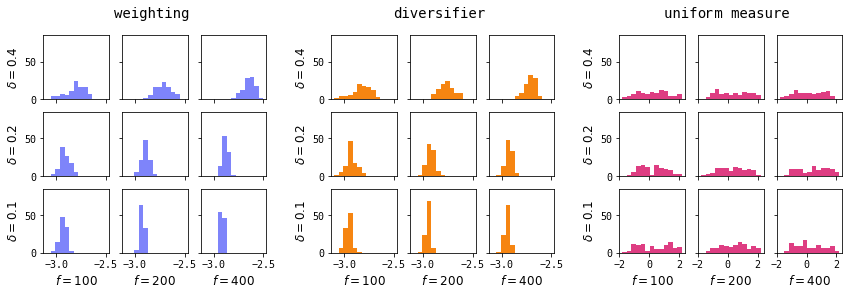}
  \caption{\highlight{Histograms of the values} of weighting integral, diversifier integral, and comparison group, \highlight{for each of 9 pairs of parameters}. Only real parts \highlight{of complex values} are shown for visualization purposes.}
  \label{fig:example-histograms}
\end{figure*}
Let us consider the following periodic continuous signal:
    \begin{equation}
    \mathbf{s} (t)
    = -1.5\sin\pi t + 2\sin \frac{2\pi t}{3}
    \qquad (T=6).
    \end{equation}
We build a dataset by randomly sampling (with noise) $100$ finite sequences of the form of equation \eqref{eq:sampling}, and then apply $E_{\tau, d}$ and compute $\widehat{\wgp({X})}(\xi)$ and $\widehat{\wg(X)}(\xi)$. 
For comparison, the same integral is calculated for the uniform measure as well. 
In this example, $t^o$ and $L$ are randomly chosen where $7 \leq \frac{L}{f} \leq 12$, so the sampling does not respect the period or phase of signals. 
The factors $\tau,\, d$ and $\xi$ are fixed as $\tau=0.5,\, d=2$ and $\xi =(1,-1)$, respectively. 
We let the remaining factors vary as $\delta \in \{0.4,\, 0.2,\, 0.1\}$ and $f \in \{100,\, 200,\, 400\}$. 

The entire process and result are presented in Figure \ref{fig:example-pipeline} and \ref{fig:example-histograms}, respectively. 
The distribution of weighting or diversifier integral shows convergent tendency as $f$ increases and $\delta$ decreases, as proved in Proposition \ref{prop:main}. 
However, such tendency is lost if the measure $\wg(X)$ in integral $\int_{\bR^d} \varphi\, d\wg(X)$ is replaced by uniform measure.

\subsection{Examples with ECG data}
\label{subsec:ecg-examples}

\begin{figure}
\centering
\includegraphics[width=0.8\textwidth]{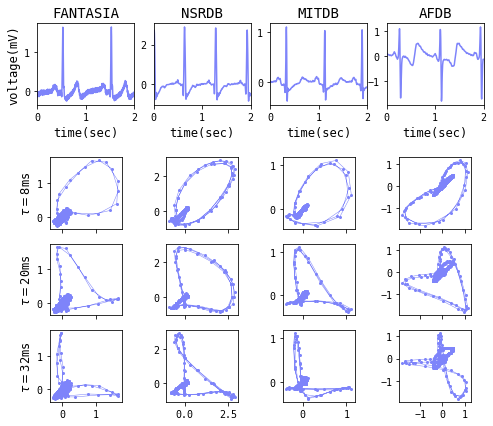} 
\caption{\highlight{Short ECG segments of selected public datasets and their time-delay embeddings with $d=2$ and varying $\tau$.}}
\label{fig:example-ecg-a}
\end{figure}

\begin{figure}
\centering
\includegraphics[width=0.99\textwidth]{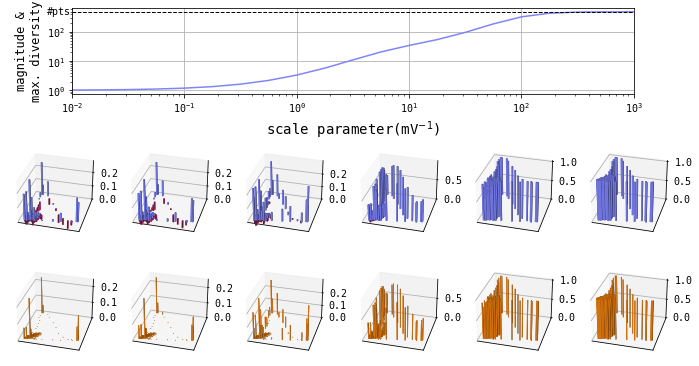}   
\caption{\highlight{Magnitude and relevant concepts computed for the time-delay embedding of an ECG segment under various scale parameters. 
The difference between magnitude and maximum diversity is scarce and not distinguishable in the figure. 
Weighting of the same set is displayed with blue and red bars for positive and negative values. 
Diversifier of the same set is displayed with yellow bars.}}
\label{fig:example-ecg-b}
\end{figure}

\begin{figure}
\centering
\includegraphics[width=0.99\textwidth]{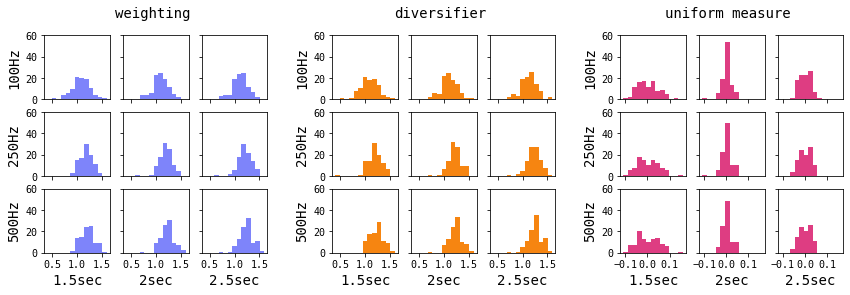}
\caption{\highlight{Histograms of the values of weighting integral, diversifier integral, and comparison group, for each of 9 pairs of parameters.} Weighting(left, blue) and diversifier(middle, yellow) show similar results.}
\label{fig:example-ecg-c}
\end{figure}
Four public datasets from PhysioNet \cite{PhysioNet}: MIT-BIH normal sinus rhythm dataset(NSRDB),\footnote{\url{https://physionet.org/content/nsrdb/1.0.0/}} FANTASIA dataset \cite{FANTASIA}, MIT-BIH arrhythmia dataset(MITDB) \cite{MITDB} and MIT-BIH atrial fibrillation dataset(AFDB) \cite{AFDB}. 
For datasets with multi-lead records, only the first channel is used. 
Voltage values of ECG records of the datasets roughly range over $\pm$ a few millivolts. 
Specifications of each dataset are in Table \ref{table:datasets}. 
Some of the datasets are cleaned before preprocessing because of dataset-specific issues (see Appendix for details). 
Every ECG signal is filtered with 4th order [0.5Hz, 50Hz] bandwidth Butterworth filter for baseline removal. 

\begin{table}
  \caption{A brief specification of datasets.}
  \label{table:datasets}
  \centering
  \begin{tabular}{lllll}
    \hline
    Name & Number of & Health & Sampling & Duration \\
    (abbr.) & subjects & state & frequency & per record \\
    \hline
    FANTASIA & 40 & healthy & 250Hz & $\sim$ 2hr \\
    NSRDB & 18 & healthy & 128Hz & $\sim$ 24hr \\
    MITDB & 47 & mixed & 360Hz & $\sim$ 0.5hr \\
    AFDB & 23 & unhealthy & 250Hz & $\sim$ 10hr \\
    \hline
  \end{tabular}
\end{table}

The examples are calculated as follows. 
We take segments of the first person of each dataset. 
First, the time-delay embedding of the samples are as in Figure \ref{fig:example-ecg-a}. 
The embedding parameters $d=3$ and $\tau = 20$ms are fixed, as this choice seems moderate in that it makes the loss of length $(d-1)f^{-1}\tau$ from time-delay embedding not that large, and at the same time the shape of embedded point set is not much squeezed because of small $d$ and $\tau$. 
Also, the magnitude and maximum diversity of the embedded point set are calculated for different scaling parameters, as commented in Section \ref{sec:background}, in Figure \ref{fig:example-ecg-b}. 
These suggest the default scale $1$ be a moderate choice, in that magnitude lies enough between $1$ and the number of points, which means magnitude `regards' the set neither as a singleton nor as a union of irrelevant points. 
Such scale dependency is also manifest in weighting and {\dispersion}. 
In fact, the similarity between magnitude and maximum diversity implies the similarity between weighting and diversifier, because we have
\begin{align*}
\| \wgp({A}) - \wg(A) \|_{\cW}^2 
& = \magnp(A) + \magn(A) - 2\langle \wgp(A) , \wg(A) \rangle_{\cW} \\
& = \magnp(A) + \magn(A) - 2 \langle Z_A \wg(A), \wgp({A}) \rangle \\
& = \magnp(A) + \magn(A) - 2 \magnp(A) \\
& = \magn(A) - \magnp(A),
\end{align*}
as $Z_A \wg(A) = 1$ on $A$. 

Next, we calculate a weighting and diversifier integral of {linear function} $\varphi(x)=\xi\cdot x$ for segments from the first person of FANTASIA dataset. 
With $\xi = (1, -1, 1) \in \bR^d=\bR^2$, 100 consecutive segments are chosen from the first person, where the timespan of each segment varies over \{1.5s, 2s, 2.5s\} and sampling frequency over \{100Hz, 250Hz, 500Hz \}. 
Like synthetic data, the comparison is made against integrals with weighting replaced by uniform measure as in Figure \ref{fig:example-ecg-c}. 
Again, the distribution of weighting and diversifier integral depends less on the duration of segments and sampling frequency.  
Meanwhile, the uniform measure case also exhibits independence from sampling frequency, but this is not strange. 
More significant is that the distribution changes throughout segments, which means that integral with uniform measure fails to capture information on the shape of the embedded point set. 
Although the variation is less compared to the weighting or {\dispersion} integral, the consequence of such failure will turn out in the next section.

\section{Application to a data analysis problem}
\label{sec:experiment}

The experiments in this section investigate how the suggested (m.d.) weighting integral can be utilized in person identification with the electrocardiogram(ECG). 
\highlight{%
This task is a kind of multi-class classification: For each of the four datasets in Table \mbox{\ref{table:datasets}}, a person identification model should be able to determine to which person a given short ECG segment belongs. 
For example, a model trained with FANTASIA dataset should classify unseen ECG segments from FANTASIA dataset into one of 40 persons.
}


\subsection{Steps of experiment}
Every ECG signal undergoes the following steps during experiments (see Figure \ref{fig:flowchart}).

\paragraph{\textbf{Preprocessing}}{
First, every ECG signal is filtered with 4th order [0.5Hz, 50Hz] bandwidth Butterworth filter for baseline removal, as in subsection \ref{subsec:ecg-examples}. 
Then after resampling to 250Hz, each signal is divided into 2-second-long segments without overlap. 
This blind segmentation does not consider precise beat positions but ensures every segment contains at least one beat by enough length of segment. 
\highlight{Among those segments 1,000 segments are chosen per person randomly, and chosen segments are divided into 60\%:20\%:20\% in temporal order for train, validation, and test, respectively.}
}
\paragraph{\textbf{Feature extraction}}{
As in subsection \ref{subsec:ecg-examples}, each 2-second-long segment is applied to time-delay embedding $E_{\tau,d}$, with $\tau = 20$ms and $d=3$ and no distance scaling (for weighting or {\dispersion}) is applied.\footnote{%
\highlight{Automatized determination of embedding parameters $\tau$ and $d$ is also possible, e.g. by cross-validation. 
Since the experiment focuses more to show the robustness of (m.d.) weighting integral to blind segmentation, the values of $\tau$ and $d$ were fixed manually rather than depending on time-consuming decision processes.
Influence of $\tau$ and $d$ on performance will be discussed later in this section.
}} 
For faster computation of the weighting or {\dispersion}, the embedded point set is downsampled to 200 points (details in Appendix). 
After calculation of weighting or {\dispersion}, the Fourier coefficient is extracted for $512$ $\xi$'s uniformly sampled from $d$-dimensional ball of radius $R$ centered at the origin. 

For better performance, we extract another blind-segmentation-robust feature: we take ``landmark'' $p \in \bR^d$ and compute randomly maximum or minimum distances from each landmark to the embedded set. 
The landmark is sampled from the ball of radius $R'$ centered at the origin. 
That is, for downsampled set $X=\{x_1,\,\cdots,\,x_n\}$ we calculate two types of quantities:
\begin{align}
    \left| \int_{\bR^d} e^{-2\pi i x\cdot \xi} {dw (x)} \right| 
    & \quad ( \| \xi \| \leq R ) \label{eq:ml-fourier}
    \\
    \min \text{ or } \max \{ \|p - x_i\| : x_i \in X\}
    & \quad  (\| p \| \leq R' ),\label{eq:ml-dist}
\end{align}
where $w = w_{X,+}$ or $w=w_X$. 
The values of $R$ and $R'$ were chosen by cross-validation.
Thus, each ECG segment is transformed into a 1,024-dimensional feature vector. 
\highlight{Each feature undergoes normalization with mean and standard deviation.}

To test the utility of suggested integrals, the feature extraction step is modified in two ways in complementary experiments as follows: $w$ in Equation \eqref{eq:ml-fourier} is replaced by uniform measure, or all of 1,024 features are set to the form \eqref{eq:ml-dist}.
}
\paragraph{\textbf{ML models}}{
Four machine learning models are chosen for the experiment: logistic regression(LR), K-nearest neighbors classifier(KNN), support vector machine(SVM), and multi-layer perceptrons(MLP). 
Hyperparameters of the models are determined by cross-validation (details in Appendix \ref{sec:ml-detail}). 
\highlight{The performance of each model is evaluated by accuracy, i.e. the proportion of correctly classified data instances.}
}
\paragraph{\textbf{Hyperparameter selection}}{
\highlight{%
The hyperparameters not clarified so far, $R$, $R'$, and those of ML models, are determined by cross-validation. 
During cross-validation, ML models are trained on train set(60\%) and then tested on validation set(20\%). 
This is done for five times for each combination of hyperparameters, and the combination with the best accuracy is selected. 
The model with the hyperparameters is evaluated through training on train and validation sets(60\%+20\%) and then testing accuracy on test set(20\%).
}
}

\begin{figure}
\centering
\begin{tikzpicture}[node distance=8ex]
\node(ecg){ECG records};

\node(preproc inner)[below of=ecg]{Preprocessing};
\node(baseline)[box,below of=preproc inner,yshift=3ex]{Baseline removal {[0.5Hz, 50Hz]}};
\node(resample)[box,below of=baseline]{Resampling {$f=250$Hz}};
\node(segmentation)[box,below of=resample]{Blind segmentation {2 seconds}};

\node(preproc)[outerbox,fit={(preproc inner) (baseline) (resample) (segmentation)}]{ };

\node(feature inner)[below of=segmentation,yshift=-2ex]{Feature extraction};
\node(embed)[box,below of=segmentation,yshift=-7ex]{time-delay embedding {$d=3$, $\tau=20$ms}};
\node(dnsample)[box,below of=embed]{downsampling {200 pts}};
\node(weighting)[box,below of=dnsample]{weighting/diversifier calculation scale$=1$};
\node(integral)[box,below of=weighting]{integral and min/max distances 1024 features per segment};

\node(feature)[outerbox,fit={(feature inner) (embed) (dnsample) (weighting) (integral)}]{ };

\node(models)[below of=integral]{ML models};

\draw[arrow](ecg) -- (preproc);
\draw[arrow](baseline)--(resample);
\draw[arrow](resample)--(segmentation);

\draw[arrow](preproc)--(feature);

\draw[arrow](embed)--(dnsample);
\draw[arrow](dnsample)--(weighting);
\draw[arrow](weighting)--(integral);

\draw[arrow](feature)--(models);
\end{tikzpicture}
\caption{Experiment steps at a glance.}
\label{fig:flowchart}
\end{figure}

\subsection{Result}
Since randomness occurs in segment selection and choice of $\xi$ and landmarks, as well as in model training, the entire routine is repeated 10 times. 
The accuracy scores are summarized in Table \ref{table:performance-main}, which shows logistic regression and support vector machines achieve the best result among selected ML algorithms, and comparable results to other works in Table \ref{table:sota}. 

Moreover, if weighting or diversifier is replaced by uniform measure or removed, the performance mostly decreases. 
This result can be explained as follows: weighting or diversifier integral is capable of capturing shape information of embedded set, whereas uniform measure cannot. 

Meanwhile, the result with the weighting integral shows similar or better performances than that with the {\dispersion} integral, but this does not seem significant. 
It is rather noteworthy that weighting integral practically works although its stability is not proven, unlike diversifier integral.
In other words, although the usage of weighting is not perfectly supported by theory, considering computation speed and performance, it is worth application in practice. 

Computation time is summarized in Table \ref{table:time-feat} and Table \ref{table:time-model}. 

\begin{table}
  \caption{Mean and standard deviation (in parenthesis) of the accuracy of identification task.}
  \label{table:performance-main}
  \centering
  \begin{tabular}{lcccc}
    \hline
    \multirow{2}{*}{Dataset}  & \multicolumn{4}{c}{Accuracy(\%)} \\
    & LR & KNN & SVM & MLP \\
    \hline
    \multicolumn{5}{c}{weighting case} \\
    \hline
    FANTASIA & \textbf{99.37} (0.04) & 99.29 (0.04) &  99.37 (0.06) & 99.30 (0.05) \\
    NSRDB & {99.30} (0.18) & 98.21 (0.43) &  \textbf{99.44} (0.22) & 98.97 (0.44) \\
    MITDB & 96.72 (0.12) & 96.12 (0.07) &  \textbf{96.82} (0.07) &  96.36 (0.29) \\
    AFDB & 97.45 (0.36) & 96.66 (0.31) &  \textbf{97.89} (0.19) &  97.26 (0.59) \\
    \hline
    \multicolumn{5}{c}{diversifier case} \\
    \hline
    FANTASIA & \textbf{99.36} (0.04) & 99.28 (0.04) &  99.36 (0.06) &  99.29 (0.05) \\
    NSRDB & 99.29 (0.18) & 98.27 (0.39) &  \textbf{99.42} (0.21) &  98.92 (0.35) \\
    MITDB & 96.80 (0.12) & 96.20 (0.05) &  \textbf{96.97} (0.06) &  96.51 (0.23) \\
    AFDB & 97.45 (0.35) & 96.60 (0.26) &  \textbf{97.87} (0.17) &  97.22 (0.47) \\
    \hline
    \multicolumn{5}{c}{uniform measure case} \\
    \hline
    FANTASIA & \textbf{99.28} (0.06) & 98.74 (0.16) &  99.12 (0.10) &  98.98 (0.11) \\
    NSRDB & \textbf{98.95} (0.52) & 96.11 (0.33) &  {98.18} (0.67) &  98.11 (0.97) \\
    MITDB & \textbf{96.54} (0.16) & 94.56 (0.05) &  {95.43} (0.16) &  95.55 (0.42) \\
    AFDB & 97.01 (0.35) & 95.59 (0.35) &  \textbf{97.58} (0.28) &  96.83 (0.26) \\
    \hline
    \multicolumn{5}{c}{default case} \\
    \hline
    FANTASIA & 99.28 (0.06) & 99.11 (0.06) &  \textbf{99.30} (0.05) &  99.16 (0.11) \\
    NSRDB & 99.18 (0.21) & 97.51 (0.44) &  \textbf{99.20} (0.35) &  98.73 (0.46) \\
    MITDB & 96.64 (0.13) & 95.99 (0.11) &  \textbf{97.11} (0.04) &  96.05 (0.58) \\
    AFDB & 95.79 (0.34) & 95.85 (0.32) &  \textbf{97.10} (0.46) &  95.76 (0.82) \\
    \hline
  \end{tabular}
\end{table}

\begin{table}
    \caption{Average computation time for feature extraction (per 1,000 instances).}
    \label{table:time-feat}
    \centering
    \begin{tabular}{rcccc}
    \hline
    & weighting & {\dispersion} & Fourier coefficient & min/max distance \\
    \hline
    time & $\sim$\,5s & $\sim$\,25s & $\sim$\,0.5s & $\sim$\,0.3s \\
    \hline
    \end{tabular}
\end{table}

\begin{table}
    \caption{Average computation time for train and test (after feature extraction).}
    \label{table:time-model}
    \centering
    \begin{tabular}{lcccc}
    \hline
    \multicolumn{5}{c}{train} \\
    \hline
    Dataset & LR & KNN & SVM & MLP \\
    \hline
    FANTASIA & 110s & $<$1s & 7s & 40s \\
    NSRDB & 41s & $<$1s & 3s & 16s \\
    MITDB & 352s & $<$1s& 16s & 144s\\
    AFDB & 99s & $<$1s & 9s& 40s\\
    \hline
    \multicolumn{5}{c}{test} \\
    \hline
    Dataset & LR & KNN & SVM & MLP \\
    \hline
    FANTASIA &  $<$1s & 2s & 11s & $<$1s \\
    NSRDB & $<$1s & $<$1s & 3s & $<$1s \\
    MITDB & $<$1s & 2s & 39s & $<$1s\\
    AFDB & $<$1s & $<$1s & 9s& $<$1s\\
    \hline
    \end{tabular}
\end{table}

\begin{table*}
  \caption{Comparison with other studies.}
  \label{table:sota}
  \centering
  \begin{tabular}{llllll}
    \hline
    \multirow{2}{*}{Study} &  \multirow{2}{*}{Method} & \multicolumn{4}{c}{Accuracy(\%)} \\
    && FANTASIA & NSRDB & MITDB & AFDB \\
    \hline
    \hline
    \multirow{2}{*}{Proposed}  & {time-delay embedding} & 
    \multirow{2}{*}{99.37} & \multirow{2}{*}{99.44}& \multirow{2}{*}{96.97}& \multirow{2}{*}{97.89} \\ 
    & + weighting + SVM&&&& \\
    \hline 
    \cite{KKP2020} & ECG coupling image + CNN & - &99.2&-&- \\
    \hline
    \cite{LPWL2020} & {cascaded CNN}& 99.9 & 96.1 &- &90.9 \\
    \hline
    \cite{Bento+2019} & spectrograms + DenseNet & 99.79 &-&-&- \\
    \hline
    \cite{CSH2019} & multi-scale residual & - & 97.17 & 95.99 & - \\
    \hline
    \cite{ZZZ2017} & {wavelet transform + MCNN}& 97.2 & 95.1 & 91.1 & 93.9 \\
    \hline
  \end{tabular}
\end{table*}

\subsection{Selection of hyperparameters}
It is difficult to theoretically trace the correlation between $d$, $\tau$, and the overall performance because the performance depends on the domain data and specific tasks, and partly because there are no restrictions on the hyperparameters $d$, $\tau$ in the core proposition(Proposition \ref{prop:main}). 
Therefore, the dependency of the result on the hyperparameters should be observed empirically. 
In this paper, the choice $d=3$ and $\tau=20$ms was based on qualitative observation of data, but the performance is fair enough compared to other choices as in Table \ref{table:performance-ablation}. 
In general, the hyperparameters could also be chosen by cross-validation to minimize human intervention inside the data analysis pipeline.

As for the choice between weighting and {\dispersion}, the result in Table \ref{table:performance-main} shows only marginal differences. 
Then, weighting is recommended because of the lower computational cost. 

\begin{table}
  \caption{Mean and standard deviation (in parenthesis) of the accuracy of identification task under different embedding parameters($d$ and $\tau$) and weighting.}
  \label{table:performance-ablation}
  \centering
  \begin{tabular}{llccccc}
    \hline
    \multirow{2}{*}{Dataset}  & \multirow{2}{*}{Model}  & \multicolumn{5}{c}{$d=3$} \\
    & & $\tau=8$ms & $\tau=20$ms & $\tau=32$ms & $\tau=44$ms & $\tau=56$ms \\
    \hline
    \multirow{2}{*}{NSRDB} 
    & LR  &98.96 (0.25)& 99.30 (0.18) &98.85 (0.50)& 99.03 (0.40)&98.97 (0.26) \\
    & SVM  &99.11 (0.27)& 99.44 (0.22) &99.17 (0.34)&98.96 (0.58)&98.73 (0.39) \\
    \multirow{2}{*}{AFDB} 
    & LR  &96.61 (0.59)& 97.45 (0.36) &97.16 (0.43)&96.56 (0.44)&95.36 (0.74) \\
    & SVM  &97.32 (0.32)& 97.89 (0.19) &97.62 (0.23)&97.14 (0.22)&96.66 (0.42) \\
    \hline
    \multirow{2}{*}{Dataset}  & \multirow{2}{*}{Model}  & \multicolumn{5}{c}{$\tau=20$ms} \\
    && $d=2$ & $d=3$ & $d=4$ & $d=5$ & $d=6$ \\
    \hline
    \multirow{2}{*}{NSRDB} 
    & LR  & 98.79 (0.25)& 99.30 (0.18) &99.38 (0.18)& 99.46 (0.15)&99.46 (0.09) \\
    & SVM  &99.12 (0.19)& 99.44 (0.22) &99.54 (0.14)&99.52 (0.19)&99.49 (0.15) \\
    \multirow{2}{*}{AFDB} 
    & LR  &95.05 (0.56) & 97.45 (0.36) &97.88 (0.22)&98.12 (0.21)&98.08 (0.39) \\
    & SVM  &96.51 (0.47)& 97.89 (0.19) &98.07 (0.15)&98.17 (0.19)&98.12 (0.24) \\
    \hline
  \end{tabular}
\end{table}

\subsection{Comparison with other studies}

It is not trivial to compare ECG individual identification performances among different studies because evaluation methods are not standardized. 
For example, sometimes longer segments are allowed (up to 9 heartbeats) during inference \cite{HYWLY2021,KP2020}, or the amount of data is restricted during training \cite{SK2017,ICTPK2020}. 
Other works with relatively similar standards are listed in Table \ref{table:sota}. 
Still, the segmentization method also matters because heartbeat-based segmentization as in \cite{KKP2020,LPWL2020,CSH2019} provides different train and test sets than blind segmentation as in \cite{Bento+2019,ZZZ2017}. 
Or simply the amount of data for training is smaller in the case of \cite{ZZZ2017} or \cite{CSH2019}. 
Nevertheless, the suggested invariant in this paper has advantages in the simplicity of the model while keeping moderate performance compared to other papers. 
This is also meaningful in that issues from blind segmentation (e.g., the position of the heartbeat is not constant, the number of heartbeats in each segment varies) are solved without recourse to DNN models but with geometric ideas. 

\subsection{Limitations}

For real application in ECG biometrics, the following should be further considered with the proposed invariant. 
First, every person's ECG signal changes over time, and long-term usage should be considered also. 
To this end, several works \cite{SSS2021,ADI2021,MTV2023} evaluate algorithms in multi-session datasets, in which several ECG signals over long periods are collected from each person. 
Second, ECG collection with more causal devices is more applicable for person identification. 
Unlike the datasets used in this paper, there are ``off-the-person'' datasets, which are collected by less invasive devices and used in works like \cite{CC2019,ICTPK2020,SSS2021}. 
The works \cite{MTV2023,ICTPK2020} compare performance with those more realistic datasets and show that the identification model may perform less on multi-session or off-the-person datasets. 
These settings are not covered in this paper and therefore should be included in the scope of further research. 

\highlight{%
The current experiment focused more on the verification of the robustness of (m.d.) weighting integrals to blind segmentation (and indirectly, the continuity of (m.d.) weighting). 
Before studying how to achieve state-of-the-art level performance with (m.d.) weighting integrals, there are more questions to be answered. 
How the parameters $d$ and $\tau$ of time-delay embedding should be determined, without depending on time-consuming cross-validation process? 
How the (m.d.) weighting of a time series (or a point cloud in general) can be transformed into an element of $\bR^N$ for fixed $N$, say for data analysis purpose? 
As used in this article, integral against a test function $\varphi$ is also possible, but this added \textit{a function} to the hyperparameters to be determined. 
The time complexity of weighting and m.d. weighting also matters: the former requires solving linear equation, and the latter requires do convex optimization. 
For broader application, efficient computation technique should be studied.
}

\section{Conclusion}
\label{sec:conclusion}

This article has \highlight{first proposed novel theoretical facts on magnitude and weighting and similar facts on maximum diversity. 
A new concept of maximum diversity weighting was suggested as a measure that contains strictly more information than maximum diversity, similarly as weighting is to magnitude. 
The continuity of maximum diversity was able to be transferred to the continuity of m.d. weighting, similarly as the continuity of magnitude (which is not always the case, though) to weighting. 

Then, these theoretical results was associated with an application to time series analysis of periodic time series.
Periodic time series is first transformed into (a subset of) a closed curve, and the (m.d.) weighting of the transformed set produced invariants by integration against test functions. 
In the case of weighting integral, the condition for stability is not exactly satisfied with real data. 
But in the experiments weighting integrals performed as well as m.d. weighting integrals, whose stability is theoretically supported.

In other words, this article illuminated the utility of (m.d.) weighting as an indicator of ``shape'' of given set, in the sense that it satisfies (semi-)continuity with respect to Hausdorff distance. 
}
To the best of the author's knowledge, there are no other notions of measure or distribution, even for a finite subset of $\bR^d$, which depends on the shape rather than the ``local density'' of a metric space. 
\highlight{%
Such a property can perhaps be utilized in certain application domains, in processing point cloud data for example. 
This should be treated in subsequent studies, in addition to the theoretical study on magnitude and weighting, 
}


\section*{Conflict of Interest}
The authors declare no conflict of interest.

\section*{Use of AI tools declaration}
The authors declare they have not used Artificial Intelligence (AI) tools in the creation of this article.

\appendix

\section{\highlight{Proof of stability propositions}}
\label{sec:proofs}

\begin{proof}[Proof of Proposition \ref{prop:main-w}]
Let $\hat{A} = E(\hat{\mathbf{s}})$ and $A = E(\mathbf{s})$. 
By the assumption $Lf^{-1}-(d-1)\tau \geq T$, for each $t \in \bR$, there exists $k \in \{ 1,2,\cdots,L-(d-1)f\tau\}$ such that $|t - (t^o + kf^{-1})| < \frac{1}{2f}$. 
Then $d_H(A,\hat{A}) < \sup \{ \| E(\mathbf{s})(t_1) - E(\mathbf{s})(t_2) \| : |t_1 - t_2 | < \frac{1}{2f} \}$. 
Since $\hat{A} \subset A$, Proposition \ref{prop:mm-conti}(1) and Lemma \ref{lem:wm-ineq} together imply that $\int_{\bR^d} \varphi d \wg (E(\hat{\mathbf{s}}))$ converges as $f \rightarrow \infty$ to $\int_{\bR^d} \varphi d \wg (E({\mathbf{s}}))$.
\end{proof}

\begin{proof}[Proof of Proposition \ref{prop:main}]
Let $\hat{A}$ be the time-delay embedding of a segment $\hat{\mathbf{s}}$ satisfying the hypothesis with relevant parameters $f$ and $\delta$. 
Likewise, let $A$ be the time-delay embedding of the original signal $\mathbf{s}$.

We need to estimate both terms inside $\max$ of $d_H(A,\hat{A}) = \max\{ \sup_{x\in A} d(x,\hat{A}), \sup_{y \in \hat{A}} d(A, y) \}$. 
By construction, every element $y \in \hat{A}$ is of the form
\begin{equation}\label{eq:elt-of-discrete-embedding}
    y = 
    \begin{pmatrix}
    \mathbf{s}(t^o + kf^{-1}) \\ 
    \mathbf{s}(t^o + kf^{-1}+\tau) \\
    \vdots \\
    \mathbf{s}(t^0 + kf^{-1}+(d-1)\tau )
    \end{pmatrix}
    +
    \begin{pmatrix}
    e_k \\
    e_{k+\tau f} \\
    \vdots \\
    e_{k+(d-1)\tau f}
    \end{pmatrix},
\end{equation}
and the first term of right-hand side is in $A$ and the second term has norm $\leq \sqrt{d}\delta$.
Hence, we have $\sup_{y\in \hat{A}} d(A,y) \leq \sqrt{d}\delta$.
On the other hand, for every element $x = (\mathbf{s}(c),\,\mathbf{s}(c+\tau),\,\cdots,\,\mathbf{s}(c+(d-1)\tau)) \in A$ we can choose element $y$ of the form \eqref{eq:elt-of-discrete-embedding} such that $|c - (t^0 + kf^{-1})| \leq \frac{1}{2}f^{-1}$. 
Then we have $d(x,y) \leq \frac{1}{2}\sqrt{d}\|\mathbf{s}'\|_{\sup} f^{-1}$, which in turn implies $\sup_{x\in A} d(x,\hat{A}) \leq \sqrt{d}\|\mathbf{s}'\|_{\sup} f^{-1} +\sqrt{d}\delta$.
Combining the results, we have
    \begin{equation*}
    d_H(A,\hat{A}) \leq \sqrt{d} (\delta + \| \mathbf{s}' \|_{\sup} f^{-1}),
    \end{equation*}
and then Proposition \ref{prop:mw-diff-estimate} gives
    \begin{equation*}
    \begin{aligned}
    \left| \int_{\bR^d} \varphi \, dw_{A,+} - \int_{\bR^d} \varphi \, dw_{\hat{A},+} \right|
    &  = \langle Z^{-1}\varphi , w_{A,+} - w_{\hat{A},+} \rangle_{\cW} \\
    & \leq \| Z^{-1}\varphi \| _{\cW}\, \| w_{A,+} - w_{\hat{A},+} \| _{\cW} \\
    & \leq 4\sqrt{2} \, \| Z^{-1}\varphi \|_{\cW} \,  |A \cup \hat{A}|_+ \, d ^{\frac{1}{4}} (\delta + \| \mathbf{s}' \|_{\mathrm{sup}} f^{-1} )^{\frac{1}{2}}.
    \end{aligned}
    \end{equation*}
Let $B$ be the closure of radius $1$ neighborhood of $A$. 
For sufficiently small $\delta$ and large $f$ so that $d_H(A,\hat{A}) \leq \delta+\|\mathbf{s}' \|_{\sup} f^{-1 } < 1$, we have $|A \cup \hat{A}|_+ \leq |B|_+ < \infty$. 
The proof is complete.
\end{proof}

\section{Details of data analysis experiment} \label{sec:ml-detail}

This section supplements details for implementing the experiment in Section \ref{sec:experiment} of the main article. 
Codes are available at the GitHub repository (\url{https://github.com/sinwall/magnitude-invariant-ecg}).

\subsection*{Hardward specifications}

The hardware used is equipped with Intel(R) Core(TM) i7-10700K CPU and 64.0GB RAM and without GPU, and as software Python 3.8.6 is used. 
Omitted details on code implementation, hyperparameters, etc. can be found in the Appendix. 

\subsection*{Dataset-specific issues}

Several signals of NSRDB start and end with some minutes of artifacts without any beat annotation, so those are removed in our experiments for proper evaluation. 
FANTASIA dataset contains NaN values, sometimes too long to interpolate. 
Therefore the segments with NaNs are removed after blind segmentation.

\subsection*{Implementaion of feature extraction}
As mentioned in Section \ref{sec:experiment} Python 3.8.6 is used to conduct experiments. 
Computation of weighting is just solving linear equation $Z_X w_X = \mathbf{1}$, so this is implemented by \texttt{numpy.linalg.lstsq} of Numpy \cite{python.numpy}, and just-in-time compilation of Numba \cite{python.numba} for faster computation. 
On the other hand, computation of {\dispersion} requires convex optimization, so it is implemented with CVXPY \cite{python.cvxpy,python.cvxpy.re}, more precisely by implementing equivalent problem mentioned in the proof of Lemma \ref{lem:Zw+=1+}.

Both computations take longer time as the embedded point set becomes larger. 
Therefore the point set is downsampled after time-delay embedding and before computation of weighting or {\dispersion}. 
Though not in an optimal way, but enough for application, the Algorithm \ref{alg:reduce} drops points that have small distances with succeeding or proceeding points.

The radii $R$ and $R'$ for balls from which $\xi$ and $p$ (of Equations \eqref{eq:ml-fourier} and \eqref{eq:ml-dist}) are uniformly sampled, respectively, are choosen among $\{0.25,\,0.5,\,1,\,2,\,4\}$, considering the values of ECG signal ranges $\pm$ few milivolts, as in Figure \ref{fig:example-ecg-a}. 
To save time, the choice for $R$ and $R'$ is based on performances of KNN model only, with downsampling to $100$ points, and only $256$ feature per segment. 

\begin{algorithm}
    \caption{Downsampling points from set.}
    \label{alg:reduce}
    \begin{algorithmic}[1]
    \Procedure{Downsample}{$X, k$}
        \If{$k == 0$}
            \State \Return $X$
        \EndIf
        \State $x_0,\,x_1,\,x_2,\,\cdots \leftarrow$ elements of $X$ in temporal order 
        \State $distances \leftarrow [ d(x_0, x_1),\, d(x_2, x_3),\, d(x_4, x_5),\cdots\ ]$
        \State $d(x_{i_1-1},x_{i_1}),\, d(x_{i_2-1},x_{i_2}),\ \cdots\, \leftarrow$ $\lceil k/2 \rceil$ least entries of $distances$
        \State $Y \leftarrow $ remove $x_{i_1},\, x_{i_2},\, \cdots$ from $X$
        \State \Return \textsc{Downsample}($Y$, $\lfloor k/2 \rfloor$)
    \EndProcedure
    \end{algorithmic}
\end{algorithm}
\subsection*{Specification of ML models}
Machine learning algorithms are implemented by Scikit-learn \cite{python.sklearn}. 
The APIs and hyperparameters are as follows.
\begin{itemize}
\item For LR (\texttt{.linear\_model.LogisticRegression}) \texttt{C} is choosen among $\{10^{-2}, 10^{-1}, 10^{0}, 10^{1}, 10^{2}\}$.
\item For KNN (\texttt{.neighbors.KNeighborsClassifier}) \texttt{n\_neighbors} and \texttt{weights} are choosen among $\{1,\,2,\,5,\,10\}$ and \{`unifrom',\,`distance'\}, respectively.
\item For SVM (\texttt{.svm.SVC}) \text{C} and \texttt{gamma} are choosen from $\{10^{-1}, 10^{0}, 10^{1}\}$ and $\{10^{-4}, 10^{-3}, 10^{-2}\}$, respectively.
\item For MLP (\texttt{.neural\_network.MLPClassifier}) \texttt{hidden\_layers} are choosen from $\{(64,\,),\,(128,\,),\,(256,\,),\,(512,\,)\}$.
\end{itemize}
Every other parameter is set to default of Scikit-learn, except \texttt{random\_state} which is controlled in another way for reproducibility.

Cross-validation for model parameters is done after values of $R$ and $R'$ are determined ahead. 
To save time, downsampling is done to $100$ points as above but extracted features are $1,024$ per segment.

\bibliographystyle{abbrv}
\bibliography{references}
\end{document}